%% file: neurips_2026.tex
\documentclass{article}

\PassOptionsToPackage{numbers, compress}{natbib}

\usepackage{graphicx}
\usepackage{titletoc}
 \usepackage[preprint]{neurips_2026}

\usepackage[utf8]{inputenc} 
\usepackage[T1]{fontenc}    
\usepackage[hidelinks]{hyperref}       
\usepackage{microtype}
\usepackage{graphicx}
\usepackage{subcaption}
\usepackage{booktabs}
\usepackage{multirow}
\usepackage{etoc}
\usepackage{algorithm}

\usepackage{algorithmic}
\usepackage{amsmath}
\usepackage{amssymb}
\usepackage{mathtools}
\usepackage{amsthm}
\newtheorem{theorem}{Theorem}

\usepackage[capitalize,noabbrev]{cleveref}
\usepackage{url}            
\usepackage{booktabs}       
\usepackage{amsfonts}       
\usepackage{microtype}      
\usepackage{xcolor}         
\usepackage{float}
\newcommand{\rev}[1]{\textcolor{black}{#1}}  

\title{\rev{STEER: Structured Event Evidence for Video Reasoning via Multi-Objective Reinforcement Learning}}

\author{Zinuo Li$^{1,3}$, Yongxin Guo$^{2}$, Jun Liu$^{3}$, Jiawei Zhan$^{3}$, Xi Jiang$^{4}$, Chengjie Wang$^{3}$, \\ \textbf{Mohammed Bennamoun$^{1}$, Farid Boussaid$^{1}$, Feng Zheng$^{4}$, Qiuhong Ke$^{5*}$} \\
  {$^{1}$University of Western Australia}
  {$^{2}$CUHKSZ}
  {$^{3}$Tencent Youtu Lab}
  {$^{4}$SUSTech} 
  {$^{5}$Monash University}
  \\
}

\begin{document}

\maketitle

\vspace{-2.5em}

\begin{center}
    \includegraphics[width=\textwidth]{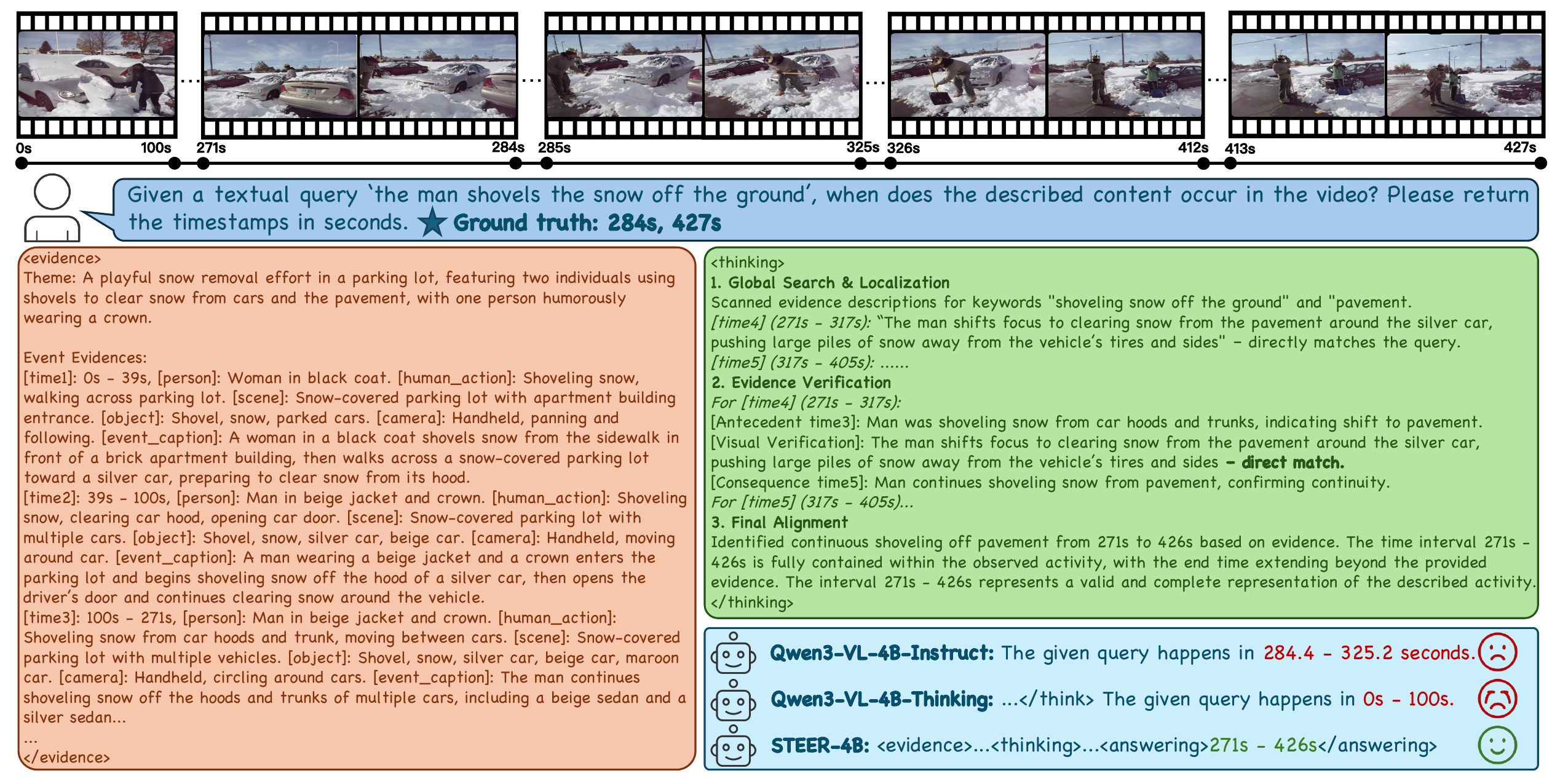}
    \vspace{-1em}
    \captionof{figure}{\rev{Given a video, we first establish event-level \rev{evidence} for time, person, action, etc. These clues constrain the subsequent thinking process, enabling evidence-based reasoning that focuses on temporal event dependencies.}}
    \label{fig:teaser}
\end{center}
\vspace{-0.5em}

\begin{abstract}
  \rev{Human understanding of video dynamics relies on forming structured representations of entities, actions, and temporal relations before engaging in abstract reasoning. In contrast, existing Video-LLMs apply unstructured chain-of-thought directly to raw visual tokens, where critical temporal cues are buried in verbose narration and event-level structure is largely overlooked. We propose \textit{Structured Event Evidence}, which represents a video as a compact, time-ordered event schema capturing salient events with key attributes and inter-event temporal dependencies, enabling evidence-grounded reasoning through a constrained verification process. This design promotes concise, interpretable reasoning while reducing the drift typical of unconstrained chain-of-thought. To train models under this paradigm, we introduce \textit{STEER-60K}, a dataset with a four-stage progressive pipeline: evidence training, format warm-start, thinking warm-start, and RL post-training. During RL, CoT length and task accuracy often conflict while rewards for hard samples are too sparse, causing the policy to neglect challenging instances. We formulate this as a multi-objective Pareto optimality problem and propose Pareto-Frontier guided Advantage Balancing (P-FAB), which dynamically resolves reward conflicts and identifies balanced optimization directions along the Pareto frontier. The resulting model \textit{STEER-4B} rivals 7B-scale baselines on video understanding tasks with half the input frames Code and data will be released.}
\end{abstract}

\input{sec/1_intro}

\input{sec/2_related}
\input{sec/3_method}
\input{sec/4_exp}

\input{sec/5_con}

{
    \small
    \bibliographystyle{ieee_fullname}
    \bibliography{example_paper}
}

\input{sec/6_app}
\end{document}

%% file: sec/1_intro.tex
\section{Introduction}
\vspace{-0.5em}
Large Vision-Language Models (LVLMs) have achieved remarkable success in static image understanding~\cite{liu2024visual, bai2023qwen} and have recently been extended to the video domain to capture dynamic temporal information~\cite{zhang2023videollama, li2024llamavid}. Chain-of-Thought (CoT) reasoning~\cite{wei2022chain}, which encourages models to generate intermediate reasoning steps before producing the answer, has become the dominant paradigm for enhancing both language and multimodal reasoning performance~\cite{bai2025qwen3vl}.

Unlike text, video data is characterized by dense spatio-temporal redundancy. When models simply employ unstructured CoT, the resulting generation often devolves into verbose narratives where pivotal visual cues are drowned in irrelevant tokens. \textbf{\textit{Empirically, reasoning-enhanced thinking models often underperform compared to their instruction-tuned counterparts}}~\cite{bai2025qwen3vl}, suffering from reasoning drift and reducing the temporal dimension to isolated frame retrieval rather than analyzing event dependencies. This contrasts with human cognition, where perception relies on forming a structured mental sketch that registers entities, actions, and event boundaries before abstract reasoning~\cite{zacks2007event}. 

Motivated by this cognitive mechanism, we encourage the model to first construct \textit{Structured Event Evidence}, a compact, time-ordered schema encoding salient events, objects, actions, and scenes, before reasoning begins. As shown in~\cref{fig:example}, this structured prior anchors subsequent reasoning so that it is concise, salient, and evidence-grounded. However, the RL stage of this structured reasoning framework surfaces a multi-objective challenge with two critical phenomena. \textit{Conflicting Objectives}: CoT length and task accuracy tend to conflict; without length constraints, the model produces excessively dense evidence (e.g., per-second events), causing information redundancy and low reasoning efficiency. \textit{Difficult Objectives}: rewards for hard samples are extremely sparse, so the policy tends to update toward easy directions, neglecting challenging instances. Balancing these objectives requires finding solutions where improving any single objective, such as task accuracy, necessarily comes at the cost of another, such as CoT efficiency. This is a Pareto optimality problem, yet standard GRPO~\cite{shao2024deepseekmath} computes a scalar advantage via fixed-weight summation, collapsing all objectives into a single score and making it unable to distinguish which objectives are under-satisfied. This paper addresses these challenges and makes the following contributions:
\vspace{-0.5em}
\begin{itemize}
\setlength{\itemsep}{0pt}
\setlength{\parsep}{0pt}
    \item We identify that unstructured CoT buries critical temporal cues in verbose narration and propose Structured Event Evidence, a time-ordered schema that decomposes videos into attribute-rich segments with temporal dependencies, providing an explicit foundation for evidence-grounded reasoning.
    \item We introduce STEER-60K, a 60K-annotation dataset over 32K videos, and a progressive pipeline that first teaches evidence extraction, then structured reasoning. This staged curriculum prevents the model from hallucinating evidence or producing degraded reasoning.
    \item We identify that RL post-training for structured reasoning introduces multi-objective conflicts between CoT length and task accuracy, with sparse rewards on hard samples. We formulate this as a Pareto optimality problem and propose Pareto-Frontier guided Advantage Balancing (P-FAB), which dynamically resolves these conflicts.
    \item The resulting model, STEER-4B, rivals 7B-scale baselines on both temporal grounding and general video understanding benchmarks with half the input frames.
\end{itemize}

%% file: sec/2_related.tex
\vspace{-1em}
\section{Related Work}
\vspace{-0.5em}
\paragraph{Video understanding LVLMs.}
Following the success of image-to-text LVLMs~\cite{liu2024visual, dai2023instructblip}, early video LLMs treated videos as sequences of static frames and employed token compression to fit high-dimensional spatio-temporal signals into limited LLM token budgets~\cite{maaz2023video,zhang2023videollama}.  
However, this frame-level approach overlooks core challenges in video understanding, particularly temporal grounding and event-level reasoning~\citep{liu2024etbench,chen2024rextime}. To address this, recent works have introduced time-aware architectures~\cite{ren2024timechat,huang2024vtimellm,guotrace,guo2025vtg} to better capture temporal dynamics, or incorporated memory and tool-augmented mechanisms for handling long-form, untrimmed videos~\cite{song2024moviechat,qian2024videostreaming,wang2024videoagent}.  
Parallel efforts focus on improving training data quality and scale~\cite{zhang2024llavavideo} and establishing reliable benchmarks for evaluating long-horizon temporal and structured reasoning~\cite{mangalam2023egoschema,fu2024videomme}.  
Yet, despite architectural advances, current video LLMs often rely on sparse frame heuristics rather than modeling true event dynamics~\citep{mangalam2023egoschema,fu2024videomme,xiao2021nextqa}, limiting their performance in complex real-world scenarios. Our method addresses this by learning an explicit structured prior that summarizes key events and their temporal dependencies before free-form reasoning, effectively enhancing temporal grounding and reducing dependence on brittle frame-selection shortcuts.

\vspace{-0.5em}
\paragraph{Reasoning and chain-of-thought in LVLMs.}
Chain-of-Thought (CoT) prompting~\cite{wei2022chain} improves LLM reasoning by eliciting intermediate steps, and Multimodal-CoT extends this to vision-language models by conditioning reasoning traces on visual features~\cite{zhang2023multimodal}.  
However, visual inputs are often both redundant, containing many irrelevant frames, and underspecified, with fine-grained temporal and event-level cues easily missed. As a result, unconstrained vision Chain of Thought tends to become verbose, drift off-topic, or hallucinate unsupported details, especially when intermediate steps lack grounding in verifiable evidence~\cite{chen2023shikra}.  
Prior approaches address this through tool-augmented verification (e.g., ReAct/MM-ReAct)~\cite{yao2023react, yang2023mmreact}, or exploration of multiple reasoning paths (e.g., Tree-of-Thoughts, self-consistency)~\cite{yao2023tree, wang2022selfconsistency}. In contrast, we generate internal \emph{Structured Event Evidence}, a compact, high-density schema encoding key events, entities, and temporal dependencies, which explicitly constrains subsequent CoT reasoning to be concise, salient, and firmly grounded in extracted video evidence.

\vspace{-0.5em}
\paragraph{Reinforcement learning in LVLMs.}
Reinforcement Learning from Human Feedback (RLHF) aligns LLMs with human preferences using comparison-based feedback~\cite{ouyang2022training, touvron2023llama}, typically via scalar rewards optimized with PPO~\cite{schulman2017proximal}. Recent work proposes GRPO~\citep{shao2024deepseekmath}, a more efficient alternative that updates policies using group-wise relative comparisons, reducing reliance on value-function estimation while enabling rule-based rewards modeling.
For LVLMs, existing approaches directly apply GRPO to diverse video understanding tasks such as reasoning, temporal grounding, and counting~\citep{feng2025videor1,wang2025timer1}.
However, these methods overlook the inherently multi-objective nature of video understanding, where models must balance objectives including structural constraints, reasoning capability, final accuracy, and computational budgets. In this paper, rather than naively averaging rewards, we optimize reward weights by solving a Multiple-Gradient Descent Algorithm (MGDA) objective to approximate the Pareto frontier~\cite{kong2025emorl,li2025hoe}. This approach aims to achieve fair, stable, and balanced optimization across all objectives.

%% file: sec/3_method.tex
\vspace{-1em}
\section{Methodology}
\vspace{-0.5em}
\begin{figure}[t]
    \centering
    \begin{minipage}[b]{0.9\linewidth}
        \includegraphics[width=\linewidth]{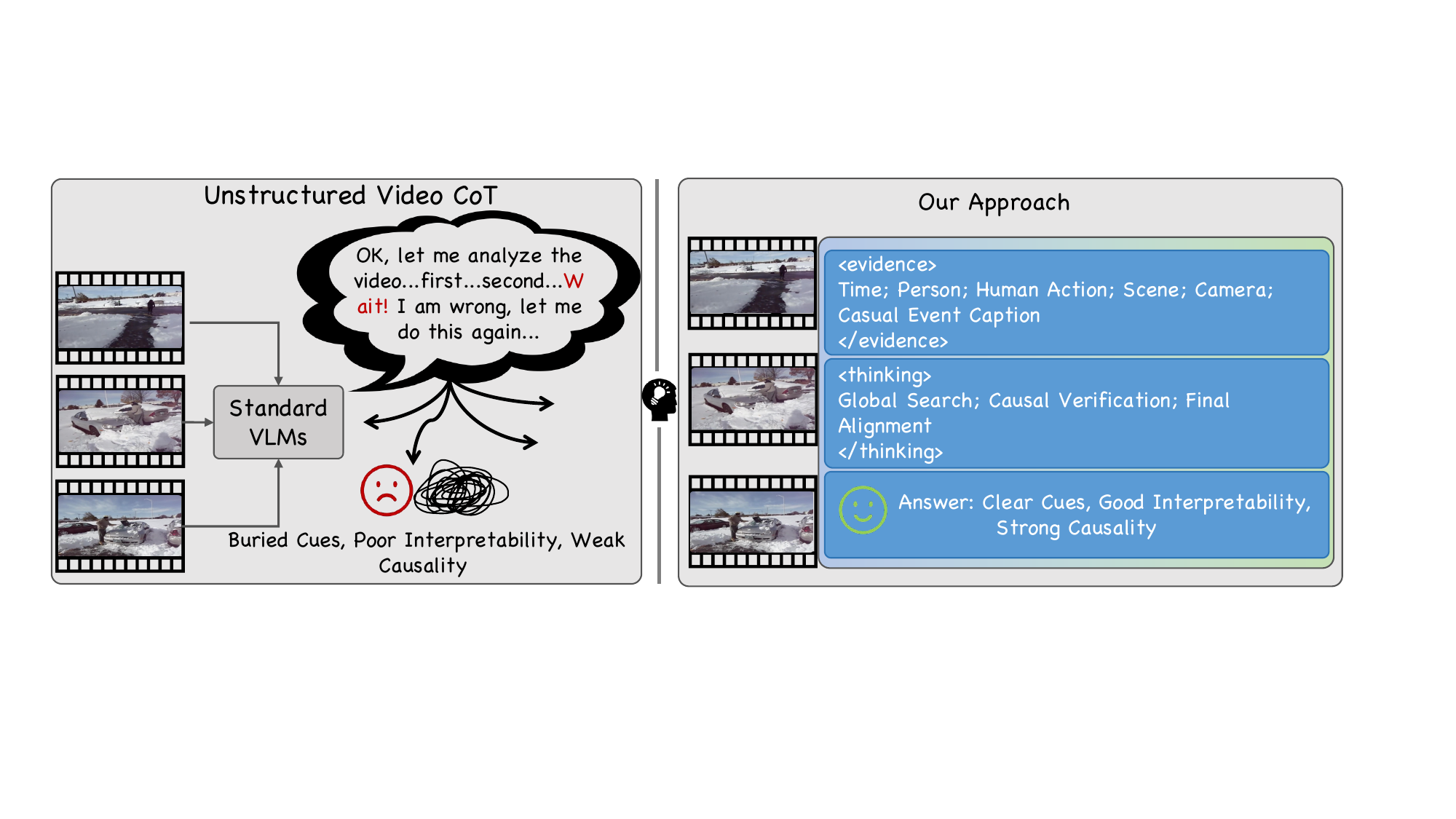}
    \end{minipage}
    \caption{Compared with existing video reasoning approaches, our model first extracts structured event evidence from videos, then applies a thinking process constrained by temporal event dependencies, clarifying critical information and enhancing interpretability. Video from ActivityNet-Captions~\cite{krishna2017dense}.
    }
    \vspace{-1.5em}
    \label{fig:example}
\end{figure}
\subsection{Overview}
\vspace{-0.5em}
To reformulate complex video thinking, a straightforward approach would typically involve two stages: \textit{a thinking warm-start} stage to establish structured logic, followed by \textit{reinforcement learning} for alignment. However, to fully leverage Structured Event Evidence, we adopt a carefully designed multi-stage training procedure.

We empirically observe that this evidence-first dependency introduces an optimization barrier. Directly optimizing accurate evidential description and complex structured reasoning through these two simple stages overwhelms the model, often leading to hallucinated evidence or degraded reasoning structures. To address these challenges, our methodology comprises three core components: (1) a curated dataset providing high-quality structured evidence and thinking traces; (2) a progressive training pipeline that decouples evidence learning from structured reasoning; and (3) Pareto-Frontier guided Advantage Balancing (P-FAB) to resolve objective conflicts in multi-objective reinforcement learning. We carefully describe them in the following sections.

\vspace{-1em}
\subsection{Data Curation: Evidence and Thinking}
\label{sec:data}
\vspace{-0.5em}
\subsubsection{Motivation}
\vspace{-0.5em}
Although video reasoning has attracted increasing attention, existing video understanding models often lack structured reasoning~\cite{wang2025timer1,bai2025qwen3vl}, causing critical clues to be submerged in redundant text. A representative example is Chain-of-Thought (CoT) reasoning, which often becomes verbose and unfocused on long videos, introducing invalid steps while underemphasizing temporal dependencies and focusing on isolated time steps. This motivates us to propose a reasoning process with clearer clues, improved interpretability, and stronger temporal structure. We therefore require the model to first construct \textit{Structured Event Evidence} before reasoning, thereby establishing a structurally grounded reasoning process. 

\vspace{-0.5em}
\subsubsection{Data Construction Pipeline}
\vspace{-0.5em}
We curate STEER-60K from high-quality human-annotated VTG datasets~\cite{gao2017charades, krishna2017dense} through a two-stage pipeline (illustrated in Appendix~\ref{appendix:data_pipeline} Figure~\ref{fig:datapip}). We first filter for event-dense videos and apply gap filling to ensure temporal continuity, then employ two complementary LLMs: Qwen3-VL-235B-A22B-Instruct~\cite{bai2025qwen3vl} and Gemini-2.5-Pro~\cite{comanici2025gemini-2-5} in alternating generator-judge roles to produce structured evidence (for Stages 1 and 1.5) and evidence-grounded thinking traces (for Stage 2).

Specifically, we encourage the model to output \textit{Structured Event Evidence} as the primary step of our framework to enhance structured information extraction, aiming to provide precise, compact, and query-relevant clues that establish a reliable semantic foundation for subsequent reasoning. In this stage, the model is required to describe the major events in the video at reasonable intervals, spanning approximately 10 to 30 seconds, and extract the key cues within each event, such as salient characters, human actions, scene context, and camera position, and then provide a detailed, time-ordered description of each event in chronological sequence, as illustrated in~\cref{fig:teaser}.

Building upon the resulting Structured Event Evidence, we aim to steer the model toward better structured reasoning. Leveraging the salient clues extracted in the evidence phase, we decompose reasoning into stages that progressively narrow, verify, and consolidate evidence, improving both efficiency and reliability.  Specifically, we adopt a structured evidence-grounded reasoning process with three tightly coupled stages: 1. Global Search \& Localization, 2. Evidence Verification, and 3. Final Alignment. 

As shown in~\cref{fig:teaser}, the first phase \textit{Global Search \& Localization} requires the model to retrieve query-relevant clues from the structured evidence list to quickly narrow attention to a small set of highly relevant time windows, providing a reliable starting point for downstream reasoning. In the \textit{Evidence Verification} stage, the model examines temporal coherence by analyzing the preceding and succeeding events around the hypothesized interval, and validates the localization through temporal event consistency rather than shallow pattern matching. To avoid over-reliance on textual priors and ensure decisions remain in visual evidence, the model performs mandatory visual verification on the inferred interval using vision tokens paired with textual prompts. Before \textit{Final Alignment} stage, the model performs a global consistency check to ensure the inferred temporal interval is fully contained within the observed activity sequences, thereby improving the overall reliability of the reasoning process.

To ensure annotation quality, we conduct rigorous \textbf{human evaluation} by sampling 50\% of the total 60K annotations. Three annotators independently rate each sample on a 1--10 scale. The results demonstrate high inter-annotator agreement: average scores are $8.5$, $7.9$, and $8.2$ across the three annotators (std $\approx 0.55$), with 87.1\% of samples having score differences $\leq 1$ between any two annotators. During the iterative refinement process, 28.3\% of batches were rejected and regenerated, ensuring that only high-quality annotations are retained in the final dataset. These statistics show that our mutual-critique pipeline between Qwen3-VL-235B and Gemini-2.5-Pro produces consistently reliable structured annotations. \textit{Full pipeline details, filtering criteria, and video source statistics are in Appendix~\ref{appendix:data_pipeline}.}

\vspace{-0.5em}
\subsection{Progressive Training Pipeline}
\label{sec:training}
\vspace{-0.5em}
\subsubsection{Stage Design}
\vspace{-0.5em}
To resolve the optimization barrier, we decompose the learning process into four progressive pipelines:: \textit{Evidence Training (Stage 1), Format Warm-Start (Stage 1.5), Thinking Warm-Start (Stage 2), and RL-based Post-training (Stage 3)}.

Stage 1 trains the model to produce high-quality evidence outputs with accurate descriptions of persons, actions, scenes, and other salient elements. Stage 2 focuses on developing the model's structured reasoning ability. These two stages are both trained through instruction tuning. In Stage 1, we ask the model to output evidence information from the video, while in Stage 2, we ask task-oriented questions (e.g., temporal grounding and visual reasoning). \textit{Detailed prompts are shown in the Appendix~\ref{appendix:data_curation_prompts}.}

Notably, to bridge the gap between pure evidence description (Stage 1) and complex reasoning (Stage 2), we introduce an intermediate Stage 1.5. In this stage, we require the same evidence as in Stage 1 while enforcing the model to strictly adhere to the thinking format (e.g., \texttt{<thinking>...<answering>}), thereby preparing the model for structured reasoning. Without this step, the model tends to produce inconsistent or malformed reasoning structures. Finally, Stage 3 employs our proposed Pareto-Frontier guided Advantage Balancing (P-FAB) to resolve the objective conflicts in multi-objective reinforcement learning. Since each stage requires distinct supervision signals, we construct a comprehensive data curation pipeline, detailed in \cref{sec:data}.

\vspace{-0.5em}
\subsubsection{Pareto-Frontier guided Advantage Balancing}
\label{sec:pfab}
\vspace{-0.5em}
\textbf{Motivations.}
After the Stage 1, Stage 1.5 and Stage 2, the model acquires structured reasoning capabilities. In Stage 3, we introduce Reinforcement Learning (RL) to improve the model’s performance on downstream tasks. However, RL in this structured reasoning framework surfaces a prominent multi-objective challenge with two noteworthy phenomena:

(1) \textit{Conflicting Objectives}: CoT length and task accuracy tend to conflict. Without explicit length constraints, the model tends to produce excessively dense, fine-grained evidence (even generating per-second events), causing severe information redundancy and drastically low reasoning efficiency. (2) \textit{Difficult Objectives}: Rewards for hard samples are extremely sparse, so the policy model tends to update toward easier or moderate-difficulty directions, neglecting challenging instances entirely.

We aim for the model to balance across multiple objectives rather than optimizing one at the expense of others. We argue this is fundamentally a \textit{multi-objective optimization problem that requires Pareto-optimal trade-offs}~\cite{deb2005searchingpareto}. However, the standard Group Relative Policy Optimization (GRPO)~\cite{shao2024deepseekmath} advantage is a single scalar computed via simple weighted summation, preventing the model from distinguishing which objectives' rewards are easy or hard to obtain, thereby lacking a clear optimization direction and making it difficult to achieve balanced learning. As illustrated in Figure~\ref{fig:pfab}, GRPO assigns the same advantage values to candidates with the same average reward, thereby masking the underlying distribution and potential imbalances among different reward components.

\begin{figure}[t]
    \centering
    \begin{minipage}[b]{0.9\linewidth}
        \includegraphics[width=\linewidth]{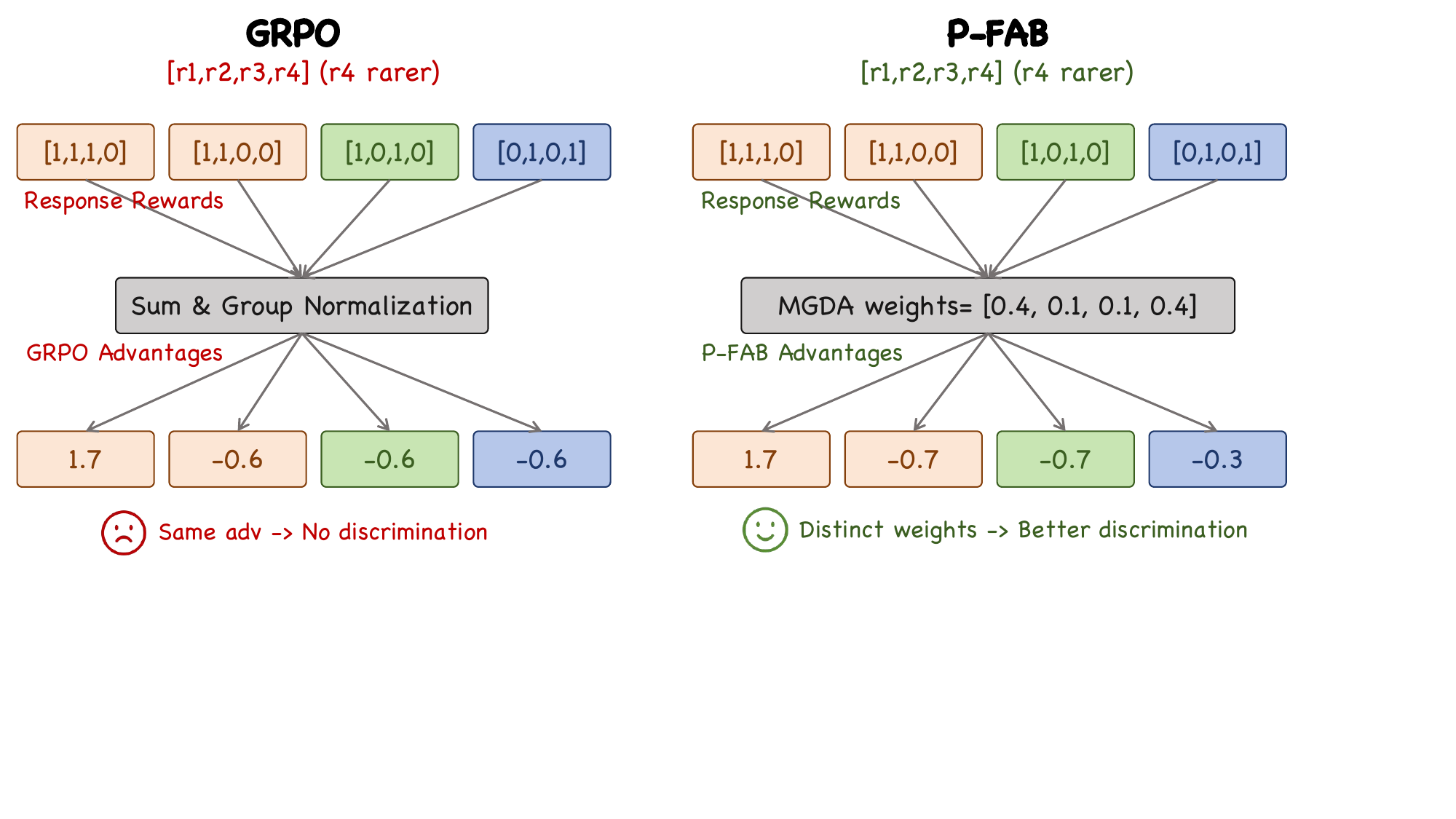}
    \end{minipage}
    \caption{GRPO vs P-FAB advantage comparison. Assume a response has rewards [r1, r2, r3, r4], where r1–r4 represent the rewards this response receives on each objective. P-FAB dynamically adjusts weights by solving a minimum-norm problem in the standardized reward space, aiming to ensure that rare but critical signals are not overwhelmed by high-variance conflicting objectives. }
    \vspace{-1.5em}
    \label{fig:pfab}
\end{figure}

\textbf{Pitfalls of traditional policy gradient optimization.}
In policy gradient optimization, updates follow the direction $A \cdot \nabla_{\theta}\log\pi_{\theta}(o_i \mid q_i)$, where the advantage $A$ is derived from a weighted reward $R_{\text{final}} = \sum w_i r_i$. To manage conflicting objectives, a common practice is to tune the coefficients $w_i$ to prioritize specific goals. However, since these weights remain static during training, they often exacerbate trade-offs rather than resolving the underlying conflicts. For instance, if accuracy rewards are weighted heavily, the model learns to produce increasingly verbose evidence to maximize accuracy, while the under-weighted length penalty provides insufficient corrective signal once the bottleneck shifts from accuracy to conciseness.

\textbf{Methodology Details.} 
For each prompt group $q$ with $G$ responses, we first compute centered rewards and organize them into a matrix $\mathbf{D}_q \in \mathbb{R}^{G \times M}$, where each column $m$ represents the per-response deviation from the group mean for objective $m$, as illustrated in~\cref{eq:relative_advantage}. We interpret each column as a \textit{``directional preference vector''}, in essence a \textit{``vote''} from objective $m$ on which samples to promote or suppress. By solving for the minimum-norm convex combination of these column vectors, we derive a compromise update that is maximally compatible with all objectives. Inspired by MGDA~\cite{desideri2012mgda}, we propose \textit{Pareto-Frontier guided Advantage Balancing (P-FAB)} algorithm, as described in~\cref{alg:pfab_groupwise}.

\begin{algorithm}[ht]
\caption{P-FAB}
\label{alg:pfab_groupwise}
\begin{algorithmic}[1]
\STATE \textbf{Input:}
\STATE \hspace{1.5em} $\mathbf{R} \in \mathbb{R}^{N_{\text{total}} \times M}$: rewards for $N_{\text{total}}$ responses and $M$ objectives
\STATE \hspace{1.5em} $\mathbf{g} \in \mathbb{Z}^{N_{\text{total}}}$: group (prompt) ID for each response
\STATE \hspace{1.5em} $\tau > 0$: standard deviation threshold (e.g., $10^{-6}$)
\STATE \textbf{Output:}
\STATE \hspace{1.5em} $\mathbf{A} \in \mathbb{R}^{N_{\text{total}}}$: scalarized advantage for each response

\STATE Initialize $\mathbf{A} \gets \mathbf{0}_{N_{\text{total}}}$
\FOR{each unique group $q \in \mathrm{unique}(\mathbf{g})$}
    \STATE \textit{// Step 1: Rewards Centralization (Eq.~\ref{eq:relative_advantage})}
    \STATE $\mathcal{I}_q \gets \{i : g_i = q\}$ \hfill $\triangleright$ Indices in group $q$
    \STATE $\mathbf{D}_q \gets \mathbf{R}_{\mathcal{I}_q,:} - \mathrm{mean}(\mathbf{R}_{\mathcal{I}_q,:}, \text{axis}=0)$
    
    \STATE \textit{// Step 2: Standardized Conflict Resolution (Eq.~\ref{eq:mgda_opt_group})}
    \STATE $\boldsymbol{\sigma}_q \gets \mathrm{std}(\mathbf{D}_q, \text{axis}=0)$
    \STATE $\mathcal{V} \gets \{m : \sigma_{q,m} > \tau\}$ \hfill $\triangleright$ Valid objectives
    \IF{$\mathcal{V} = \emptyset$}
        \STATE $\boldsymbol{\alpha}^* \gets \frac{1}{M}\mathbf{1}_M$ \hfill $\triangleright$ Fallback to uniform
    \ELSE
        \STATE $\hat{\mathbf{D}}_q \gets \mathbf{D}_q[:, \mathcal{V}] / \boldsymbol{\sigma}_q[\mathcal{V}]$ \hfill $\triangleright$ Standardization
        \STATE Solve:
        $\boldsymbol{\alpha}_{\mathcal{V}}^* = \arg\min_{\boldsymbol{\alpha}} \|\hat{\mathbf{D}}_q\boldsymbol{\alpha}\|^2$ s.t. $\boldsymbol{\alpha} \in \Delta_{|\mathcal{V}|}$ 
        via Frank-Wolfe
        \STATE $\boldsymbol{\alpha}^* \gets \mathbf{0}_M$; \quad $\boldsymbol{\alpha}^*[\mathcal{V}] \gets \boldsymbol{\alpha}_{\mathcal{V}}^*$
    \ENDIF

    \STATE \textit{// Step 3: Advantage Scalarization (Eq.~\ref{eq:final_advantage_group})}
    \STATE $\mathbf{A}^{\text{raw}}_q \gets \mathbf{D}_q \boldsymbol{\alpha}^*$ \hfill $\triangleright$ Apply weights to centered rewards
    \STATE $\mathbf{A}[\mathcal{I}_q] \gets (\mathbf{A}^{\text{raw}}_q - \mathrm{mean}(\mathbf{A}^{\text{raw}}_q)) / (\mathrm{std}(\mathbf{A}^{\text{raw}}_q) + \epsilon)$
\ENDFOR
\STATE \textbf{return} $\mathbf{A}$
\end{algorithmic}
\end{algorithm}

Following the group-relative approach in GRPO~\cite{shao2024deepseekmath}, we first isolate the relative quality of each response within its prompt group. For each prompt $q$, we sample a group of $G$ responses $\{o_1, \dots, o_G\}$. Let $r_{i,m}$ denote the reward for response $o_i$ on objective $m \in \{1, \dots, M\}$. The centered reward $\delta_{i,m}$ is computed by subtracting the group mean:
\begin{equation}
\label{eq:relative_advantage}
\delta_{i,m} = r_{i,m} - \frac{1}{G}\sum_{j=1}^{G} r_{j,m}.
\end{equation}
These centered values form the matrix $\mathbf{D}_q \in \mathbb{R}^{G \times M}$ with entries $[\mathbf{D}_q]_{i,m} = \delta_{i,m}$, indicating whether response $o_i$ performs above or below the group average for objective $m$.

\textbf{Solving Common Direction.} 
The key idea is to compute a single update direction that best reconciles the per-objective signals by minimizing their disagreement in a shared space. Inspired by MGDA~\cite{desideri2012mgda}, which finds a common descent direction by minimizing $\min_{\boldsymbol{\alpha} \in \Delta_M} \| \sum_{m} \alpha_m \nabla L_m(\theta) \|^2$, P-FAB operates analogously in reward space, where $\delta_{i,m}$ serves as the per-sample, per-objective update signal. A key property of this formulation is that when P-FAB achieves balance in the standardized reward space ($\hat{\mathbf{D}}_q\boldsymbol{\alpha}^*=\mathbf{0}$), the policy reaches Pareto stationarity in parameter space. However, a primary challenge is scale sensitivity: objectives with larger variance can dominate the norm minimization, suppressing signals from other objectives. To ensure scale invariance, P-FAB computes optimal weights in a standardized space. We construct $\hat{\mathbf{D}}_q$ by scaling each column of $\mathbf{D}_q$ by its standard deviation $\sigma_{q,m}$, ignoring objectives where $\sigma_{q,m} < \tau$ to maintain numerical stability.

Next, we seek the weight vector $\boldsymbol{\alpha}^{*}_q$ by employing the Frank-Wolfe algorithm~\cite{jaggi2013revisitingfrankwolfe}, which is particularly efficient for this constrained quadratic problem, yielding the minimum-norm aggregate direction in the standardized space.:
\begin{align}
\label{eq:mgda_opt_group}
\boldsymbol{\alpha}^{*}_q &= \arg\min_{\boldsymbol{\alpha} \in \Delta_M} \left\| \hat{\mathbf{D}}_q \boldsymbol{\alpha} \right\|^{2}. \nonumber \\
s.t.\; & \Delta_M = \{\boldsymbol{\alpha} \in \mathbb{R}^M \mid \sum \alpha_m = 1, \alpha_m \ge 0\}
\end{align}

\textbf{Advantage scalarization.}
Once the Pareto-optimal weights $\boldsymbol{\alpha}^{*}_q$ are obtained, we apply them to the centered rewards $\mathbf{D}_q$ to preserve the original signal magnitude, followed by a final group-wise normalization:
\begin{equation}
\label{eq:final_advantage_group}
A_i^{\text{raw}} = \sum_{m=1}^{M}\alpha^{*}_{q,m}\,\delta_{i,m}, \quad\quad A_i = \frac{A_i^{\text{raw}} - \mu_{A}}{\sigma_{A} + \epsilon}.
\end{equation}

Finally, $A_i$ is substituted into the standard GRPO objective. Let $\rho_i(\theta) = \pi_\theta(o_i|q) / \pi_{\theta_{\text{old}}}(o_i|q)$ be the probability ratio. The policy is optimized by:
\begin{equation}
\label{eq:policy_loss}
\resizebox{0.75\linewidth}{!}{$
\mathcal{L}(\theta) = -\frac{1}{|\mathcal{B}|} \sum_{i} \left[ \min \left( \rho_i A_i,\; \mathrm{clip}(\rho_i, 1{-}\varepsilon, 1{+}\varepsilon) A_i \right) - \beta \mathbb{D}_{\mathrm{KL}} \right]
$}
\end{equation}
where $|\mathcal{B}|$ is the batch size, $\varepsilon$ is the clipping parameter, and $\beta \mathbb{D}_{\text{KL}}$ is the KL-divergence penalty that constrains the policy to remain close to the reference model.

\textbf{Discussion on P-FAB.} As illustrated in~\cref{fig:pfab}, P-FAB naturally amplifies \textbf{\textit{sparse and hard-to-satisfy}} objectives while down-weighting saturated ones, producing more discriminative advantages that prioritize samples satisfying rare criteria over those excelling only on easy tasks. As training progresses and $\|\hat{\mathbf{D}}_q\boldsymbol{\alpha}^*\|$ decreases, the policy moves toward Pareto stationarity (Theorem~\ref{thm:pareto}), where any further gain in one objective necessitates a trade-off in another.

\input{tables/abaltions}

%% file: tables/abaltions.tex
\begin{table*}[t]
\centering
\resizebox{1.0\linewidth}{!}{%
\begin{tabular}{@{}lccccccccc@{}}
\toprule
\multirow{2}{*}{\textit{\textbf{Model}}}    & \multicolumn{3}{c}{\textbf{Charades-TimeLens}}                     & \multicolumn{3}{c}{\textbf{ActivityNet-Captions}}                  & \textbf{VideoMME}                  & \multicolumn{2}{c}{\textbf{MLVU}}                                                                        \\ \cmidrule(l){2-10} 
                                            & R1@0.3        & R1@0.5        & \multicolumn{1}{c|}{R1@0.7}        & R1@0.3        & R1@0.5        & \multicolumn{1}{c|}{R1@0.7}        & \multicolumn{1}{c|}{Acc (w/o sub)} & \begin{tabular}[c]{@{}c@{}}TR\\ (Acc)\end{tabular} & \begin{tabular}[c]{@{}c@{}}Ego\\ (Acc)\end{tabular} \\ \midrule
\textit{Baseline Models~\color{gray}{1fps}} &               &               & \multicolumn{1}{c|}{}              &               &               & \multicolumn{1}{c|}{}              & \multicolumn{1}{c|}{}              &                                                    &                                                     \\
Qwen3-VL-4B-Instruct*~\cite{bai2025qwen3vl} & 56.5          & 37.8          & \multicolumn{1}{c|}{18.4}          & 50.2          & 35.8          & \multicolumn{1}{c|}{21.6}          & \multicolumn{1}{c|}{63.4}          & 80.1                                           & \textbf{60.3}                                       \\
Qwen3-VL-4B-Thinking*~\cite{bai2025qwen3vl} & 56.3          & 37.4          & \multicolumn{1}{c|}{17.8}          & 47.8          & 31.7          & \multicolumn{1}{c|}{18.8}          & \multicolumn{1}{c|}{63.1}          & 79.5                                               & 59.0                                                \\ \midrule
\textit{Ablations~\color{gray}{1fps}}       &               &               & \multicolumn{1}{c|}{}              &               &               & \multicolumn{1}{c|}{}              & \multicolumn{1}{c|}{}              &                                                    &                                                     \\
w/o Evidence                                   & 54.0          & 37.0          & \multicolumn{1}{c|}{16.5}          & 50.7          & 34.3          & \multicolumn{1}{c|}{23.2}          & \multicolumn{1}{c|}{60.8}          & 76.2                                               & 56.8                                                \\
w/o Thinking                                & 53.1          & 36.2          & \multicolumn{1}{c|}{16.1}          & 47.5          & 33.2          & \multicolumn{1}{c|}{21.8}          & \multicolumn{1}{c|}{58.5}          & 75.6                                               & 55.4                                                \\
w/o Stage 1.5                               & 54.8          & 37.6          & \multicolumn{1}{c|}{17.2}          & 55.1          & 35.6          & \multicolumn{1}{c|}{24.1}          & \multicolumn{1}{c|}{62.3}          & 77.1                                               & 57.8                                                \\
Unified SFT                                 & 52.1          & 34.8          & \multicolumn{1}{c|}{15.3}          & 50.3          & 31.3          & \multicolumn{1}{c|}{21.2}          & \multicolumn{1}{c|}{59.2}          & 74.5                                               & 55.6                                                \\
w/o RL                                      & 55.4          & 38.2          & \multicolumn{1}{c|}{18.2}          & 53.4          & 34.4          & \multicolumn{1}{c|}{23.7}          & \multicolumn{1}{c|}{59.1}          & 76.8                                               & 57.5                                                \\
GRPO = 4                                    & 56.4          & 39.1          & \multicolumn{1}{c|}{19.4}          & 56.8          & 35.9          & \multicolumn{1}{c|}{24.6}          & \multicolumn{1}{c|}{60.2}          & 78.5                                               & 58.9                                                \\
P-FAB = 4                                   & 56.3          & 39.4          & \multicolumn{1}{c|}{20.1}          & 58.0          & 37.9          & \multicolumn{1}{c|}{25.2}          & \multicolumn{1}{c|}{61.7}          & 79.1                                               & 59.4                                                \\
GRPO = 8                                    & \textbf{57.8} & 39.6          & \multicolumn{1}{c|}{20.4}          & 59.2          & 38.0          & \multicolumn{1}{c|}{26.7}          & \multicolumn{1}{c|}{62.6}          & 79.3                                               & 59.8                                                \\
\textbf{P-FAB = 8}                          & 57.1          & \textbf{40.4} & \multicolumn{1}{c|}{\textbf{21.6}} & \textbf{61.7} & \textbf{41.2} & \multicolumn{1}{c|}{\textbf{28.1}} & \multicolumn{1}{c|}{\textbf{64.7}} & \textbf{80.6}                                      & \textbf{60.3}                                       \\ \bottomrule
\end{tabular}
}
\caption{Ablation studies on two classical video temporal grounding datasets and two general understanding datasets. TR and Ego mean Topic Reasoning and Egocentric Video Understanding, respectively.}
\label{tab:ablation}
\vspace{-1em}
\end{table*}

%% file: sec/4_exp.tex
\vspace{-0.5em}
\section{Experiments}
\vspace{-0.5em}
We use \textit{Qwen3-VL-4B-Instruct} as our base model with a four-stage training pipeline. Stages 1--2 use LoRA fine-tuning; Stage 3 (RL) performs full-parameter updates with four reward objectives (\textit{Format, Linear IoU, Multi-choice Accuracy, Length}). We set $\mathrm{fps}=1$ and evaluate all Qwen3 baselines under the same setting; other baselines use official results (typically $\mathrm{fps}=2$). \textbf{\textit{Additional analyses including TempCompass/EventBench evaluation, 8B scalability, P-FAB overhead, token efficiency, data quality, data overlap, and reasoning quality are provided in Appendices.}}

\vspace{-0.5em}
\subsection{Ablation Studies}
\vspace{-0.5em}
\label{sec:ablation}
We evaluate component effectiveness on Charades-TimeLens~\cite{zhang2025timelens} and ActivityNet-Captions~\cite{krishna2017dense} for temporal grounding, and VideoMME~\cite{fu2024videomme} and MLVU~\cite{zhou2025mlvu} for general understanding. We compare our full model against seven variants: (1) \textit{w/o Evidence} excludes evidence descriptions to perform direct structured thinking; (2) \textit{w/o Thinking} generates evidence but predicts answers without structured reasoning; (3) \textit{w/o Stage 1.5} removes the format warm-start stage; (4) \textit{Unified SFT} merges all SFT stages into a single joint training stage; (5) \textit{w/o RL} utilizes only supervised fine-tuning; and (6-7) \textit{GRPO/P-FAB}, comparing standard GRPO against our P-FAB with $\mathrm{num\_generation=4~or~8}$. \textit{Detailed dataset introductions are shown in the Appendix~\ref{sec:benchmark_details}.}

As shown in Table~\ref{tab:ablation}, both the evidence extraction and structured thinking are indispensable. Among single-component ablations, removing the thinking process \textit{(w/o Thinking)} leads to severe performance collapse across all benchmarks. This failure stems from the model's inability to effectively utilize the extracted evidence. Without a reasoning bridge, the evidential link between structured observation and the final answer is severed, rendering the evidence underutilized and disconnected from the objective. Similarly, omitting structured evidence (\textit{w/o Evidence}) significantly impairs performance, as this evidence serves as clear cues that guide the thinking process. Without this grounding foundation, the reasoning chain frequently enters logical pitfalls or misguided paths, confirming that verified evidence acts as a critical anchor to steer the thinking process away from misconceptions.

The progressive training design is also validated: removing Stage 1.5 (\textit{w/o Stage 1.5}) causes notable drops on both temporal grounding and general understanding benchmarks, confirming that the format warm-start stage is essential for bridging evidence extraction and complex reasoning. More strikingly, collapsing all SFT stages into a single phase (\textit{Unified SFT}) leads to severe degradation across all metrics, demonstrating that the progressive curriculum is critical for stable learning.

Regarding the optimization strategy, P-FAB generally outperforms standard GRPO, particularly on metrics that require precise temporal localization and on general understanding benchmarks. The advantage generally widens as group size increases from 4 to 8: a larger group produces a richer sample of the response distribution, making multi-objective conflicts more \textit{visible} in the centered reward matrix $\mathbf{D}_q$. GRPO's fixed-weight summation averages these conflicts away, while P-FAB's dynamic Pareto weighting exploits the richer signal to better resolve them. Finally, RL post-training itself is critical: the \textit{w/o RL} ablation shows substantial degradation, confirming that supervised fine-tuning alone cannot fully unlock the potential of structured reasoning.

\input{tables/vtg}
\input{tables/etbench}
\vspace{-0.5em}
\subsection{Comparisons with Existing Video-LLMs}
\label{sec:comparisons}
\vspace{-0.5em}
We compare STEER-4B against leading Video-LLMs on temporal grounding and general video understanding in Table~\ref{tab:vtg} and Table~\ref{tab:bench}, respectively. To ensure fair comparison, we evaluate key open-source baselines under the both 2fps and 1fps setting. Beyond the quantitative gains, we highlight three qualitative insights that emerge from the results.

\textbf{Structured evidence amplifies a strong 4B base.} Under identical evaluation settings, STEER-4B yields consistent gains over its own backbone, isolating the contribution of structured evidence from the choice of base model. While part of the gap to older 7B baselines reflects a generational difference in their backbones, this same-base comparison rules out that the gains stem from the backbone alone. We interpret structured evidence as an \textit{information bottleneck} complementary to the base model: the backbone supplies fine-grained perception, while the evidence schema imposes a discrete, time-ordered abstraction so that reasoning operates on compact event-level representations rather than thousands of redundant frame tokens.

\textbf{Structured evidence improves boundary precision.} A consistent pattern in Table~\ref{tab:vtg} is that structured evidence yields substantial improvements across all IoU thresholds, with gains evident on both lenient and strict metrics. This reveals that structured evidence does not merely help the model find the ``right neighborhood'' but also refine precise temporal boundaries. The explicit event timestamps encoded in the evidence provide hard anchors for the start and end of activities, enabling the evidence verification stage to align predictions tightly with ground-truth intervals.

\textbf{Unstructured thinking is counterproductive for video.} Across nearly all benchmarks in both tables, \textit{Qwen3-VL-4B-Thinking} underperforms its \textit{Instruct} counterpart, a trend also observed by~\cite{bai2025qwen3vl}. We argue the root cause is a \textit{token budget misallocation}: unstructured CoT spends tokens on verbose self-corrections, repetitive scene narrations, and speculative tangents, leaving insufficient capacity for the actual temporal reasoning that matters. Our evidence-first paradigm resolves this by front-loading structured information extraction, so that downstream thinking tokens are spent primarily on evidence-grounded inference rather than redundant perception. This also explains why STEER-4B achieves a notable gain over the Thinking baseline on NExT-GQA (Table~\ref{tab:bench}), where temporal and Why/How reasoning is essential.

It is also worth noting that STEER-4B uses only 1\,fps while most competing models operate at 2\,fps with significantly more frames. Since higher fps generally provides denser temporal sampling and more visual information, our model's ability to outperform with half the input frames further demonstrates that structured event evidence enables more effective abstraction, extracting stronger temporal signals from fewer frames.

%% file: tables/vtg.tex
\begin{table*}[t]
\centering
\resizebox{0.95\linewidth}{!}{%
\begin{tabular}{@{}lccc|ccc|ccc@{}}
\toprule
\multirow{2}{*}{\textit{\textbf{Model}}}        & \multicolumn{3}{c|}{\textbf{Charades-TimeLens}} & \multicolumn{3}{c|}{\textbf{ActivityNet-TimeLens}} & \multicolumn{3}{c}{\textbf{ActivityNet-Captions}}          \\ \cmidrule(l){2-10} 
                                                & R1@0.3         & R1@0.5         & R1@0.7        & R1@0.3          & R1@0.5          & R1@0.7         & \multicolumn{1}{l}{R1@0.3} & R1@0.5        & R1@0.7        \\ \midrule
\textit{Close-Source Models~\color{gray}{2fps}} &                &                &               &                 &                 &                & \multicolumn{1}{l}{}       &               &               \\
GPT-4o~\cite{hurst2024gpt-4o}                   & 60.6           & 44.5           & 23.5          & 55.2            & 41.4            & 25.8           & -                          & -             & -             \\
GPT-5~\cite{OpenAI2025_GPT5}                    & 59.3           & 42.0           & 22.0          & 57.4            & 44.9            & 30.4           & -                          & -             & -             \\
Gemini-2.0-Flash~\cite{comanici2025gemini-2-5}  & 66.4           & 53.5           & 27.1          & 62.9            & 54.0            & 37.7           & 50.4                       & 33.2          & 19.9          \\
Gemini-2.5-Flash~\cite{comanici2025gemini-2-5}  & 68.7           & 56.1           & 30.6          & 66.8            & 57.5            & 41.3           & 51.2                       & 34.7          & 21.0          \\
Gemini-2.5-Pro~\cite{comanici2025gemini-2-5}    & 74.1           & 61.1           & 34.0          & 72.3            & 64.2            & 47.1           & -                          & -             & -             \\ \midrule
\textit{Open-Source Models~\color{gray}{2fps}}  &                &                &               &                 &                 &                & \multicolumn{1}{l}{}       &               &               \\
VideoChat-R1-7B~\cite{li2025videochat_r1}       & 51.9           & 30.8           & 11.7          & 35.0            & 23.9            & 11.3           & -                          & 33.3          & 16.7          \\
TRACE-7B~\cite{guotrace}                                        & -              & -              & -             & -               & -               & -              & 54.0                       & 37.7          & 24.0          \\
Time-R1-7B~\cite{wang2025timer1}                & 57.9           & 32.0           & 16.9          & 44.8            & 31.0            & 19.0           & 58.1                       & 39.0          & 21.4          \\
Qwen2.5-VL-7B~\cite{qwen2-5-vl}                 & \textbf{59.7}  & 37.8           & 16.6          & 44.1            & 31.0            & 16.1           & 34.5                       & 20.8          & 11.2          \\ \midrule
\textit{Open-Source Models~\color{gray}{1fps}}  &                &                &               &                 &                 &                & \multicolumn{1}{l}{}       &               &               \\
VideoChat-R1-7B~\cite{li2025videochat_r1}       & 48.7           & 27.1           & 9.7           & -               & -               & -              & -                          & -             & -             \\
Time-R1-7B~\cite{wang2025timer1}                & 55.9           & 30.4           & 13.5          & -               & -               & -              & 56.9                       & 37.2          & 20.1          \\
TRACE-7B~\cite{guotrace}                        & -              & -              & -             & -               & -               & -              & 52.7                       & 35.9          & 22.8          \\
Qwen3-VL-4B-Instruct*~\cite{bai2025qwen3vl}     & 56.5           & 37.8           & 18.4          & 50.2            & 36.2            & 25.2           & 50.2                       & 35.8          & 21.6          \\
Qwen3-VL-4B-Thinking*~\cite{bai2025qwen3vl}     & 56.3           & 37.4           & 17.8          & 48.2            & 36.7            & 24.9           & 47.8                       & 31.7          & 19.0          \\
\textbf{STEER-4B}                              & 57.1           & \textbf{40.4}  & \textbf{21.6} & \textbf{54.7}   & \textbf{41.4}   & \textbf{26.7}  & \textbf{61.7}              & \textbf{41.2} & \textbf{28.1} \\ \bottomrule
\end{tabular}
}
\caption{Experiments on the public video temporal understanding benchmarks. * means this model is reproduced under the same video settings of our model, otherwise, the results are from official reports with more video frames.}
\label{tab:vtg}
\vspace{-1.5em}
\end{table*}

%% file: tables/etbench.tex
\begin{table*}[ht]
\centering
\resizebox{1.0\linewidth}{!}{
\begin{tabular}{@{}lcccccccc|c|cc|c@{}}
\toprule
                                                & \multicolumn{8}{c|}{\textbf{ETBench}}                                                                                         & \multicolumn{1}{l|}{\textbf{VideoMME}} & \multicolumn{2}{c|}{\textbf{MLVU}}                                                                                                         & \multicolumn{1}{l}{\textbf{NExT-GQA}} \\ \midrule
\multirow{2}{*}{\textit{\textbf{Metrics}}}      & RAR           & ECA           & RVQ           & TVG           & EPM           & TAL           & TEM           & GVQ           & \multirow{2}{*}{Acc}                   & \multirow{2}{*}{\begin{tabular}[c]{@{}c@{}}TR\\ (Acc)\end{tabular}} & \multirow{2}{*}{\begin{tabular}[c]{@{}c@{}}Ego\\ (Acc)\end{tabular}} & \multirow{2}{*}{Acc}                  \\
                                                & Acc           & Acc           & Acc           & F1            & F1            & F1            & Rec           & Acc           &                                        &                                                                     &                                                                      &                                       \\ \midrule
\textit{Close-Source Models~\color{gray}{2fps}} &               &               &               &               &               &               &               &               & \multicolumn{1}{l|}{}                  &                                                                     &                                                                      & \multicolumn{1}{l}{}                  \\
GPT-4o~\cite{hurst2024gpt-4o}                   & 27.8          & 27.3          & 57.7          & 40.4          & 4.5           & 20.0          & 13.6          & -             & 71.9                                   & -                                                                   & -                                                                    & -                                     \\
GPT-4v~\cite{openai2023gpt4v}                   & 33.3          & 40.9          & 46.2          & 27.0          & 1.8           & 18.0          & 23.9          & -             & 59.9                                   & -                                                                   & -                                                                    & -                                     \\ \midrule
\textit{Open-Source Models~\color{gray}{1fps}}  &               &               &               &               &               &               &               &               & \multicolumn{1}{l|}{}                  &                                                                     &                                                                      & \multicolumn{1}{l}{}                  \\
VideoLLama2-7B~\cite{cheng2024videollama2}      & 28.8          & 27.4          & 28.0          & 0.1           & 0.0           & 0.0           & 0.0           & -             & -                                      & -                                                                   & -                                                                    & -                                     \\
VTimeLLM-7B~\cite{huang2024vtimellm}            & 28.4          & 31.0          & 28.0          & 7.6           & 1.9           & 18.2          & 6.9           & 1.9           & -                                      & -                                                                   & -                                                                    & 17.4                                  \\
TRACE-7B~\cite{guotrace}                        & 29.4          & 28.8          & 42.6          & 46.8          & 12.3          & 21.6          & 17.8          & 52.4          & 49.6                                   & -                                                                   & -                                                                    & -                                     \\
TinyLLaVA-Video-R1-3B~\cite{zhang2025tinyllava} & -             & -             & -             & -             & -             & -             & -             & -             & 46.6                                   & -                                                                   & -                                                                    & -                                     \\
Qwen2.5-VL-7B~\cite{qwen2-5-vl}                 & 41.0          & 42.0          & 51.4          & 20.1          & 2.9           & 15.3          & 22.3          & 32.7          & 62.9                                   & 74.9                                                                   & 50.0                                                                    & 59.5                                  \\
Qwen2.5-VL-3B~\cite{qwen2-5-vl}                 & 38.4          & 47.2          & 52.2          & 19.9          & 2.6           & 14.8          & 18.1          & 55.1          & 61.5                                   & -                                                                   & -                                                                    & -                                     \\
Qwen3-VL-4B-Instruct*~\cite{bai2025qwen3vl}      & 53.1          & \textbf{54.1} & 54.3          & 64.6          & 14.8          & \textbf{20.1} & 9.6           & 57.7          & 63.4                                   & 80.1                                                                & \textbf{60.3}                                                        & 72.1                                  \\
Qwen3-VL-4B-Thinking*~\cite{bai2025qwen3vl}      & 53.0          & 51.0          & 54.2          & 64.3          & 15.7          & 20.0          & 20.4          & 55.6          & 63.1                                   & 79.5                                                                & 59.0                                                                 & 66.6                                  \\
\textbf{STEER-4B}                              & \textbf{53.4} & 53.4          & \textbf{55.0} & \textbf{66.1} & \textbf{16.8} & 19.7          & \textbf{26.8} & \textbf{58.3} & \textbf{64.7}                          & \textbf{80.6}                                                       & \textbf{60.3}                                                        & \textbf{73.6}                         \\ \bottomrule

\end{tabular}
}
\caption{Experiments on the public general video understanding benchmarks. TR and Ego mean Topic Reasoning and Egocentric Video Understanding, respectively. * means this model is reproduced under the same video settings of our model, otherwise, the results are from official reports with more video frames.}
\vspace{-1.5em}
\label{tab:bench}
\end{table*}

%% file: sec/5_con.tex
\vspace{-1em}
\section{Conclusion}
\vspace{-0.5em}
In this paper, we present \textit{STEER-4B}, a Video-LLM that first constructs
\textit{Structured Event Evidence} before reasoning, anchoring inference
to explicit, verifiable observations. We curate \textit{STEER-60K} to enable structured reasoning,
and propose P-FAB to dynamically resolve objective conflicts along the
Pareto frontier during RL. Looking forward, structured evidence offers a
general principle for grounding multimodal reasoning, extensible to cross-modal evidence fusion. In the meanwhile, although we demonstrate P-FAB in the context of structured video reasoning, the algorithm is task-agnostic and has the potential to scale to other multi-objective RL settings, we leave this exploration to future work. We hope this work provides a promising direction for scalable video reasoning.

%% file: sec/6_app.tex
\newpage
\appendix
\onecolumn
\startcontents[app]
{
    \hypersetup{linkcolor=black}
    \parskip=0em
    \renewcommand{\contentsname}{Contents of Appendix}
    \printcontents[app]{ }{1}{\section*{\contentsname}}
}

\section{Implementation Details}
\label{sec:impl_details}

We use \textit{Qwen3-VL-4B-Instruct} as our base model. Since the 4B model is more susceptible to general-capability forgetting, we adopt stage-specific hyperparameter settings across our training stages. For Stage 1 and Stage 1.5, we fine-tune with LoRA (rank $64$, $\alpha=128$), a learning rate of $1\times10^{-5}$, and weight decay of $1\times10^{-4}$. In Stage 2, we keep the same LoRA~\cite{hu2022lora} configuration but increase the learning rate to $1\times10^{-4}$ and weight decay to $2\times10^{-4}$.

For RL Stage 3, we perform full-parameter updates with a learning rate of $5\times10^{-6}$, $\mathrm{num\_generation}=8$, and video max tokens per frame set to $64$. We set $\mathrm{video\_fps}=1.0$ and $\mathrm{epoch}=1.0$ across all stages, with $\mathrm{video\_max\_frames}=256$ for Stages 1--2 and $\mathrm{video\_max\_frames}=128$ for Stage 3 to save memory. During evaluation, we re-evaluate all Qwen3 series models under the same settings as ours ($\mathrm{fps}=1$, $\mathrm{max\_frames}=256$). For other models, results are taken from official reports, typically with $\mathrm{fps}=2$ and $\mathrm{max\_frames}=2048$.

\textbf{Limitations.} Our training corpus remains
limited in scale, expanding annotation volume and diversity is an
important next step. Additionally, our main experiments use a
4B-parameter model due to computational constraints, scaling to larger
architectures remains future work contingent on additional compute.

\section{Detailed Information of Evaluation Benchmarks}
\label{sec:benchmark_details}

To comprehensively evaluate the capabilities of our model, particularly in temporal reasoning and open-ended generation, we utilize a diverse set of benchmarks categorized into two primary domains: Video Temporal Grounding and General Video Understanding.

\subsection{Video Temporal Grounding Benchmarks}
These benchmarks evaluate the model's ability to precisely localize specific moments within untrimmed videos based on textual queries.

\begin{itemize}
    \item \textbf{Charades-TimeLens}~\cite{zhang2025timelens}: A high-quality dataset re-annotated based on the Charades-STA dataset, where 66.3\% of low-quality annotations were rewritten. The final dataset contains 1,313 videos and 3,363 annotations, with an average video duration of 29.3 seconds. This benchmark focuses on daily indoor activities (e.g., cooking, reading) and is characterized by dense actions and complex temporal overlaps. Unlike the noisy annotations in the original dataset, Charades-TimeLens provides rigorously verified timestamps to evaluate fine-grained temporal localization. We report Recall@1 at IoU thresholds of 0.3, 0.5, and 0.7.

    \item \textbf{ActivityNet-TimeLens}~\cite{zhang2025timelens}: A re-annotated version of the ActivityNet-Captions~\cite{krishna2017dense} dataset introduced in the TimeLens benchmark, where 69.7\% of low-quality annotations were rewritten. The final dataset contains 1455 videos and 4500 annotations, with an average video duration of 134.9s. It addresses the issue of loose boundaries in the original annotations by providing stricter and more precise start/end times for temporal queries. Evaluating on this split offers a more rigorous assessment of a model's boundary alignment capabilities.
    
    \item \textbf{ActivityNet-Captions}~\cite{krishna2017dense}: A large-scale Video Temporal Grounding benchmark containing 4885 open-domain YouTube videos for evaluation, with 17031 annotations and an average video duration of 118s. It covers a wide range of complex human activities. The dataset is widely used for dense video captioning and temporal grounding. We evaluate performance using Recall@1 at IoU thresholds of 0.3, 0.5, and 0.7.
    
\end{itemize}

\subsection{General Video Understanding Benchmarks}
These benchmarks assess the model's holistic understanding, including temporal logic, structured reasoning, and ego-centric perception.

\begin{itemize}
    \item \textbf{VideoMME}~\cite{fu2024videomme}: A comprehensive Multi-modal LLM Evaluation benchmark designed for long-form video understanding. It covers diverse domains including movies, sports, and documentaries, with video lengths ranging from minutes to hours. It assesses the model's ability to process long-context information without subtitles.
    
    \item \textbf{MLVU}~\cite{zhou2025mlvu}: A holistic benchmark specifically tailored for long videos. It includes diverse tasks such as \textit{Topic Reasoning}, \textit{Anomaly Recognition}, \textit{Egocentric Video Understanding} QA, etc. In our experiments, we specifically focus on the Temporal Reasoning (TR) and Egocentric Video Understanding (Ego) sub-tasks to evaluate the model's sequence understanding and first-person perspective reasoning.
    
    \item \textbf{ETBench}~\cite{liu2024etbench}: A fine-grained benchmark designed to evaluate time-sensitive video understanding. We test the model on 8 different tasks: \textit{Referred Action Recognition (RAR)}, \textit{Event-Caption Alignment (ECA)}, \textit{Referred Video Question-Answering (RVQ)}, \textit{Temporal Video Grounding (TVG)}, \textit{Episodic Memory (EPM)}, \textit{Temporal Action Localization (TAL)}, \textit{Temporal Event Matching (TEM)}, and \textit{Grounded Video Question-Answering (GVQ)}. This benchmark is crucial for verifying whether the model's thinking process translates to accurate event-level deductions.
    
    \item \textbf{NExT-GQA}~\cite{xiao2024nextgqa}: An extension of the NExT-QA dataset that introduces grounded video question answering. It focuses on causal (Why) and temporal (How) questions involving rich object interactions. Unlike standard QA, NExT-GQA requires the model to not only answer the question but also ground the answer in specific video segments, making it an ideal testbed for our reasoning-based approach.
\end{itemize}

\section{Data Construction Pipeline Details}
\label{appendix:data_pipeline}

\begin{figure}[H]
    \centering
    \begin{minipage}[b]{1.0\linewidth}
        \includegraphics[width=\linewidth]{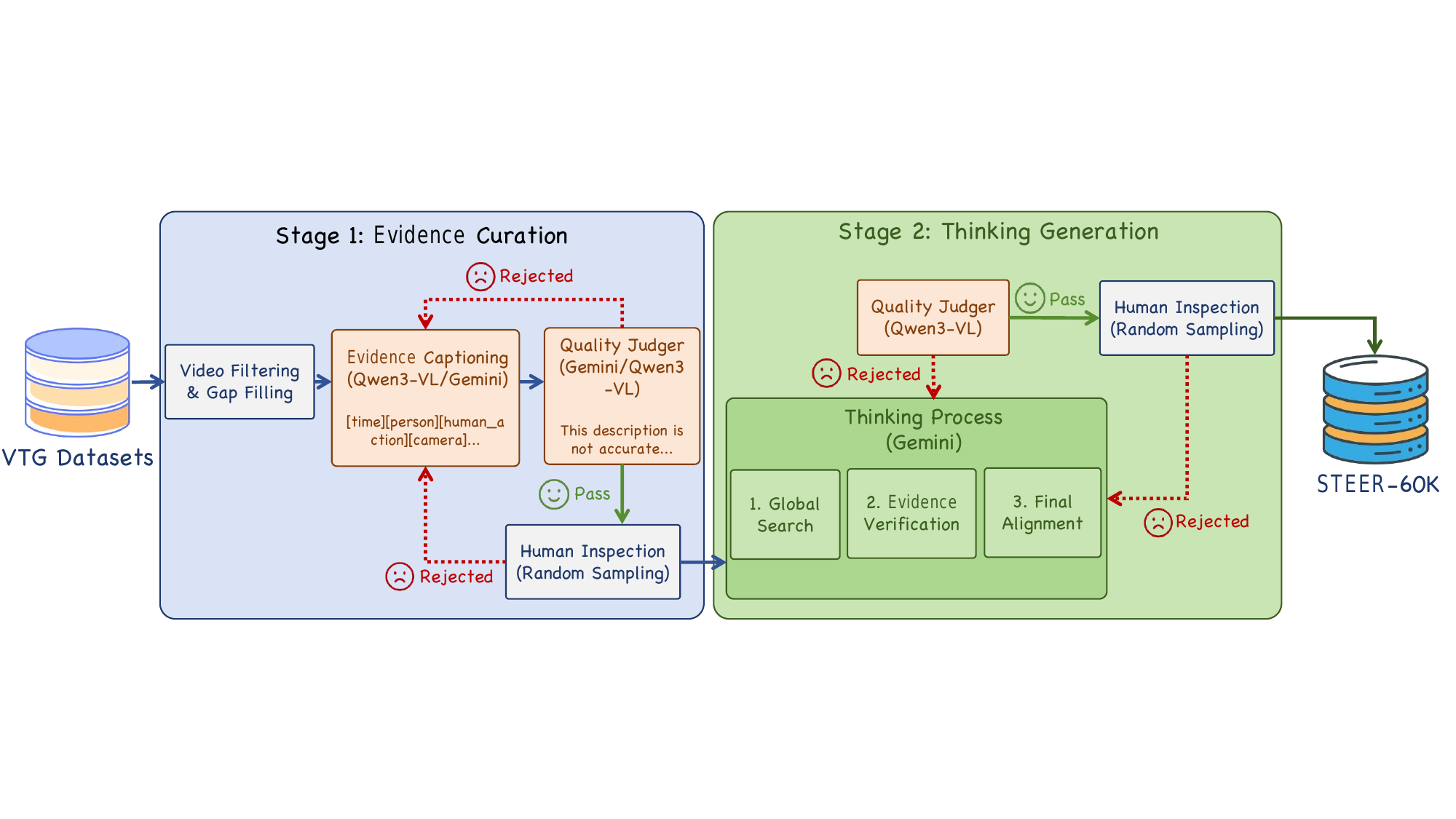}
    \end{minipage}
    \caption{Overview of the two-stage pipeline for constructing STEER-60K from VTG datasets. Stage 1 performs video filtering and gap filling, generates structured evidence captions, and applies an automatic quality judge with random human inspection; low-quality samples are rejected and iteratively refined. Stage 2 produces evidence-grounded reasoning traces, followed by a second quality-judging and human spot-checking step, yielding the final curated STEER-60K dataset.}
    \label{fig:datapip}
\end{figure}

We curate STEER-60K from high-quality human-annotated Video Temporal Grounding datasets~\cite{gao2017charades, krishna2017dense}. As illustrated in Figure~\ref{fig:datapip}, the pipeline consists of the following stages.

\paragraph{Pre-processing and filtering.} We adopt a strict selection pipeline to distill informative samples. We prioritize videos with high event density, assuming an ideal event segment spans 10--30 seconds: for a video of duration $T$, we retain it only if event number $N \ge T/30$, i.e., a 150-second video requires at least 5 events. For retained videos, we further apply gap filling to improve continuity: unlabeled intervals between adjacent annotated events (e.g., unlabeled $[25, 38]$ between annotated $[15, 25]$ and $[38, 50]$) are treated as candidate event segments for the model to describe, increasing coverage and encouraging global spatiotemporal modeling. Videos with $N < T/30$ are excluded from evidence training to avoid weakening structured signals, but are retained for the subsequent RL stage to enhance robustness and generalization.

\paragraph{Evidence generation.} We feed the filtered dense timestamps into an evidence generator to produce the event evidence descriptions. Since evidence entries are fine-grained structured signals that require strong instruction following to achieve high data quality, we employ two complementary models: Qwen3-VL-235B-A22B-Instruct~\cite{bai2025qwen3vl} and Gemini-2.5-Pro~\cite{comanici2025gemini-2-5}. To prevent specific model preferences from compromising data quality, the two models alternate roles: when one serves as the generator, the other acts as a quality judge. The two strong models mutually critique and filter outputs, enabling evidence to capture detailed visual cues. The data produced in this stage is served for Stage 1 and Stage 1.5.

\paragraph{Thinking trace generation.} After obtaining the evidence, we generate challenging questions and prompt the LLM to produce explicit thinking traces conditioned on the question and the associated evidence information. Since thinking data demands a higher quality than evidence itself, we use Gemini-2.5-Pro exclusively as the generator in this stage, while Qwen3-VL serves as the quality judge. The data produced in this stage is served for Stage 2.

\paragraph{Human evaluation.} To ensure data quality, we conduct a human evaluation by sampling 50\% of the total 60K annotations. Specifically, we partition the generated data into batches of 1,000 instances and randomly select 500 samples from each batch for manual inspection.

\section{More details of STEER-60K}
\label{appendix:data_details}
\subsection{Video Source Distribution}
We curate a total of 32,049 videos from diverse high-quality benchmarks to construct a robust training corpus, \textbf{all the videos are only selected from the training splits of the original datasets, and no original captions are used}. As shown in the statistics, the dataset is predominantly anchored by ActivityNet~\cite{krishna2017dense}, providing a broad coverage of general open-domain activities. This is complemented by significant subsets from QVHighlights~\cite{lei2021qvhilights} and COIN~\cite{tang2019coin}, which enhance the model's capability in saliency detection and fine-grained procedural understanding, respectively. Additionally, Charades-STA ~\cite{gao2017charades} contributes essential data for indoor daily actions, while Molmo2~\cite{clark2026molmo2} and YouCookII~\cite{zhou2018youcook2} provide a small number of supplementary samples.

To ensure the reliability of temporal reasoning, we select these sources specifically for their rigorous human-annotated temporal boundaries. We strictly preserve these gold-standard timestamps to guarantee precise event localization. For each validated time interval, we discard the original unstructured captions and regenerate high-density structured descriptions. This approach effectively combines the temporal precision of human annotations with the semantic richness required for our structured reasoning tasks.
%

\subsection{Video Duration and Topic Distribution}
\label{sec:appendix_stats}

In this section, we provide additional statistical details for the STEER-60K dataset to supplement the main paper. The dataset contains a total of $N=32,049$ video samples.

\textbf{Video Duration.}
As illustrated in Figure \ref{fig:topic} (Left), the dataset covers a wide range of video lengths. The median of $123.6$ seconds, with an average duration is $109.4$ seconds. The distribution is concentrated around the 1 to 3-minute range, ensuring that the videos contain sufficient temporal context for event unfolding while remaining concise enough for efficient processing.

\textbf{Topic Distribution.}
Figure \ref{fig:topic} (Right) presents the distribution across 18 distinct semantic categories. The dataset is primarily composed of event-rich topics such as \textit{Tutorials} and \textit{Sports}, which inherently contain structured actions and clear temporal dependencies. Furthermore, the inclusion of diverse categories like \textit{Vlog}, \textit{Daily Life}, \textit{Cooking}, and \textit{Travel} ensures the semantic diversity of the dataset, covering both structured instructional content and open-ended daily scenarios.

\begin{figure}[H]
  \centering
  \includegraphics[width=\linewidth]{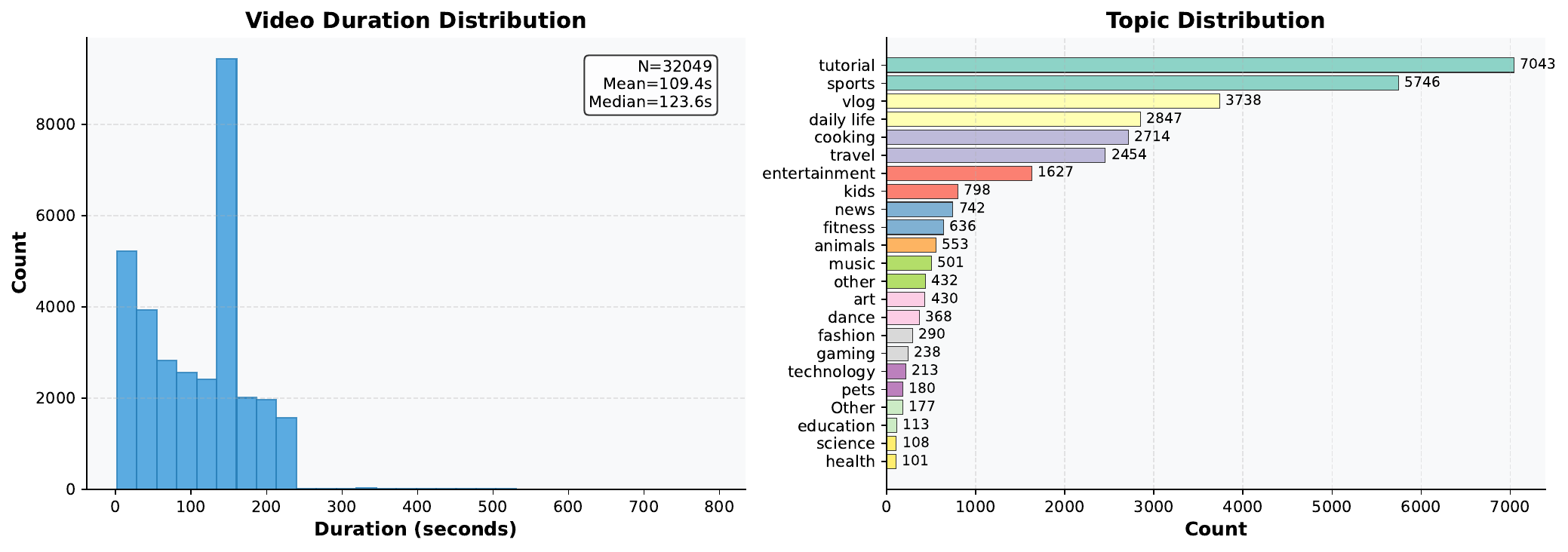}
  \caption{Statistics of the STEER-60K dataset. Left: The histogram of video durations, showing a mean length of $109.4$s and a median of $123.6$s. Right: The frequency distribution of semantic topics, dominated by action-intensive categories such as Tutorials and Sports.}
  \label{fig:topic}
\end{figure}

\subsection{RL Training Data Details}
\label{sec:rl_data_details}

In this section, we provide a detailed statistical analysis of the dataset used for the Reinforcement Learning (RL) stage. The dataset comprises $N=32,049$ video samples, specifically curated to optimize temporal reasoning and grounding capabilities through diverse task types and semantic domains.

\textbf{Task Type Composition.} As illustrated in Figure \ref{fig:rl_tasks}, the dataset follows a specific distribution across different task types to balance reward stability and semantic understanding. The majority of the dataset is dedicated to \textit{Temporal Grounding} ($53\%$). This high proportion is designed to leverage the objective nature of Temporal Intersection over Union (TIoU) metrics, which provide dense and stable reward signals during the policy optimization process. To ensure the model maintains robust logical reasoning capabilities beyond pure localization, the dataset includes a substantial portion of Visual Question Answering tasks, specifically \textit{Spatial VQA} ($21\%$) and \textit{Reasoning VQA} ($20\%$). These tasks require the model to interpret visual relationships and perform structured deductions. The remaining portion consists of \textit{Temporal VQA} ($3\%$) and captioning tasks, including \textit{Global Captioning} ($2\%$) and \textit{Local Captioning} ($1\%$), which serve as auxiliary supervision to maintain general descriptive fluency.

\textbf{Video Duration Distribution.} The temporal characteristics of the RL dataset are visualized in Figure \ref{fig:rl_tasks_dist} (Left). The video durations exhibit a wide distribution, with a mean length of $109.4$ seconds and a median of $123.6$ seconds. A significant portion of the videos is concentrated in the $100$s to $200$s range, providing sufficient temporal context for complex event unfolding while remaining computationally efficient for the iterative sampling required in RL. The presence of longer videos (up to $800$ seconds) further challenges the model's ability to maintain long-term temporal consistency within the thinking process.

\textbf{Topic Diversity.} Figure \ref{fig:rl_tasks_dist} (Right) presents the semantic distribution across the dataset. The collection is dominated by action-rich and logically structured domains, with \textit{Tutorial} ($7,043$ samples) and \textit{Sports} ($5,746$ samples) being the most prevalent categories. These topics are particularly suitable for RL as they often contain explicit step-by-step procedures and clear goal-oriented behaviors. Additionally, a diverse array of categories such as \textit{Vlog} ($3,738$), \textit{Daily Life} ($2,847$), \textit{Cooking} ($2,714$), and \textit{Travel} ($2,454$) ensures that the model's reasoning capabilities generalize across various real-world scenarios, ranging from highly structured instructional content to open-ended human activities.

\begin{figure}[H]
  \centering
  \includegraphics[width=0.5\linewidth]{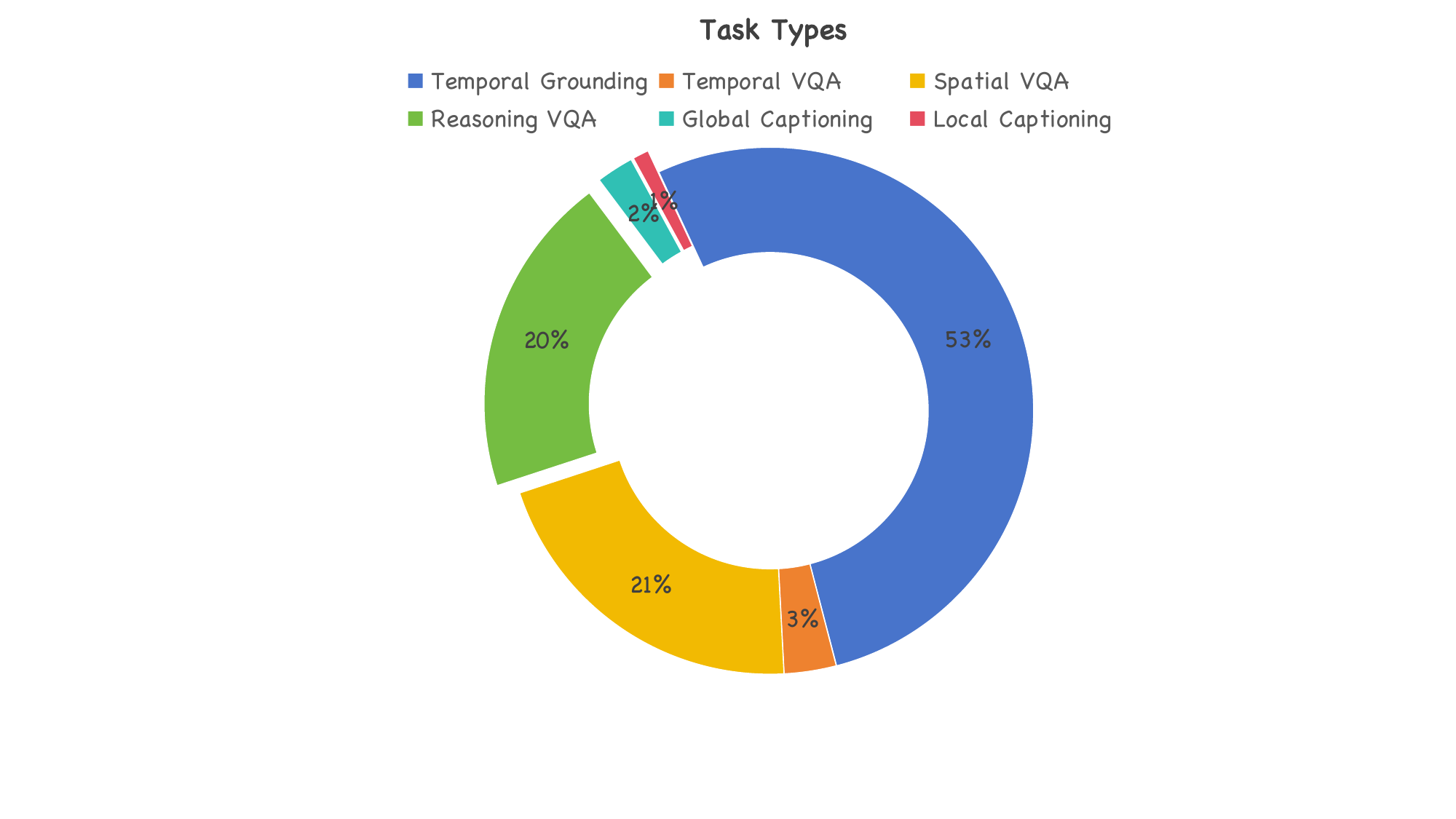}
  \caption{\textbf{Distribution of Task Types in RL Training Data.} The dataset is heavily weighted towards \textit{Temporal Grounding} ($53\%$) to utilize precise IoU-based rewards, while \textit{Spatial} and \textit{Reasoning VQA} (combined $\approx 41\%$) are included to enforce high-level semantic comprehension.}
  \label{fig:rl_tasks}
\end{figure}

\begin{figure}[H]
  \centering
  \includegraphics[width=0.9\linewidth]{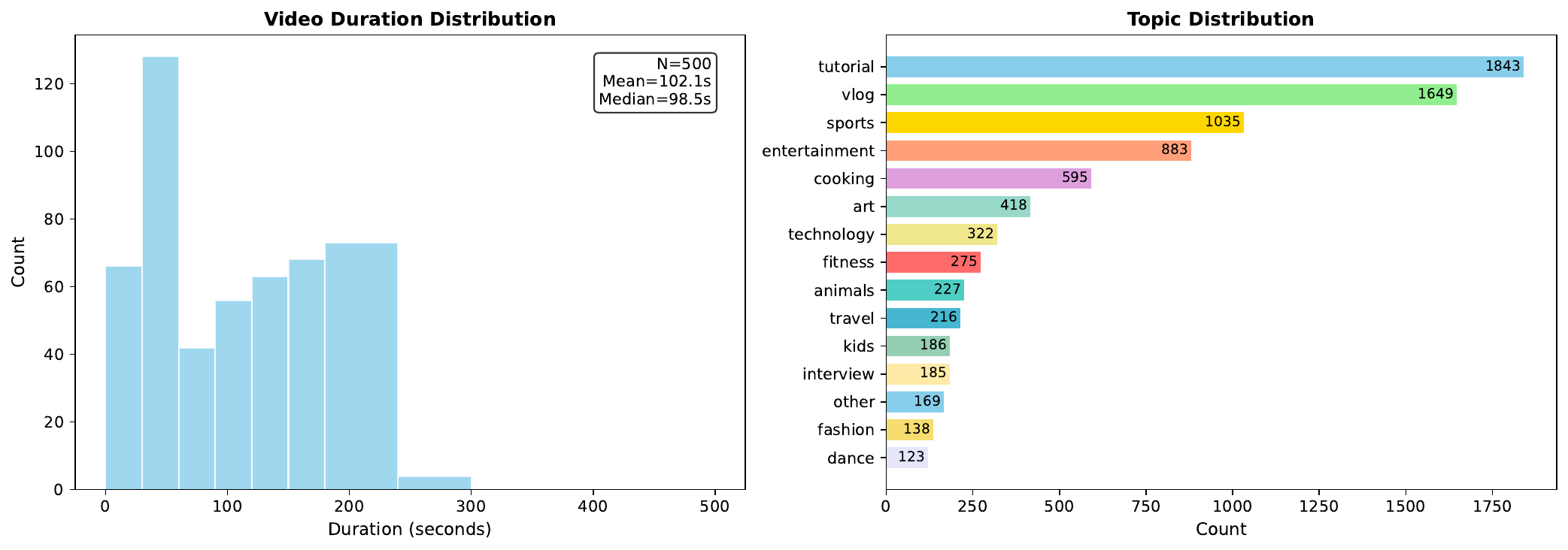}
  \caption{\textbf{Video Duration and Topic Distribution of RL Training Data.}
(Left) Distribution of video durations. The mean length is $102.1$s
with a median of $98.5$s, concentrated in the 50--250s range to
provide sufficient temporal context while remaining computationally
tractable. (Right) Distribution across semantic categories.}
  \label{fig:rl_tasks_dist}
\end{figure}

\subsection{Data Curation Prompts}
\label{appendix:data_curation_prompts}

This section describes the prompts used in our data curation pipeline. We employ three specialized prompts for different stages: Evidence Generation, Quality Judge, and Thinking Generation. The complete prompts are illustrated in Figure~\ref{fig:prompt_fact}, Figure~\ref{fig:prompt_judge}, and Figure~\ref{fig:prompt_think}, respectively.

\paragraph{Evidence Generation Prompt.} The Evidence Generation prompt (Figure~\ref{fig:prompt_fact}) instructs the model to analyze video content and produce structured summaries for pre-defined time segments. Given a list of timestamps as input, the model generates a theme summary and event evidence for each segment. The prompt enforces a strict schema with six required components per segment: \texttt{[person]}, \texttt{[human\_action]}, \texttt{[scene]}, \texttt{[object]}, \texttt{[camera]}, and \texttt{[event\_caption]}. A critical dependency rule ensures logical consistency: if no person is present in the scene, the human action field must be marked as ``None''. The prompt also specifies that timestamps must be preserved in their original seconds format without conversion to MM:SS, and all output must be enclosed within \texttt{<evidence></evidence>} tags.

\paragraph{Quality Judge Prompt.} The Quality Judge prompt (Figure~\ref{fig:prompt_judge}) defines a strict evaluation protocol for validating generated evidence. It takes two inputs: the target timestamps that must be present and the model-generated structured text. The evaluation criteria are organized into four categories: (1) \textit{Format Adherence} ensures proper tag structure, required headers, and all six mandatory fields in every line; (2) \textit{Timestamp Alignment} verifies one-to-one correspondence between output lines and input segments, rejecting any missing or hallucinated timestamps; (3) \textit{Logical Consistency} enforces the person-action dependency rule, ensuring that ``None'' person entries always correspond to ``None'' actions; and (4) \textit{Content Quality} assesses completeness, fluency of captions, and specificity of descriptions. The judge outputs a structured JSON containing a numeric score (0-10), a regeneration flag, a list of critical errors, and qualitative feedback.

\paragraph{Thinking Generation Prompt.} The Thinking Generation prompt (Figure~\ref{fig:prompt_think}) guides the model to produce evidence-based reasoning chains for video question answering. Given a user question and extracted evidence, the model must derive answers through a two-stage structured reasoning process. In \textit{Stage 1: Global Search \& Localization}, the model scans evidence descriptions for keywords related to the question, identifies candidate time segments based solely on textual evidence, and explicitly states which evidence IDs are selected as candidates. In \textit{Stage 2: Evidence Verification}, the model validates candidates using antecedent-action-consequence logic: \texttt{[Antecedent]} establishes the pre-condition contrast, \texttt{[Verifying Key Visual Actions]} provides textual proof of the event, and \texttt{[Consequence]} confirms the post-condition result. The output follows a \texttt{<thinking>...</thinking><answer>...</answer>} format, ensuring that reasoning is grounded in explicit structured evidence rather than implicit assumptions.

\section{Instruction Tuning Data Format}
\label{sec:prompts}

To effectively train the model to reason over structured evidence, we design a progressive prompt engineering strategy that guides the generation of training data across three distinct stages. This multi-stage approach ensures the model first learns to extract high-density visual evidence, then adapts to a strict structural format, and finally integrates these skills into a coherent reasoning chain.

In the initial stage, the objective is to align the model's visual perception with our structured event schema, as displayed in~\cref{fig:prompt_stage1}. We explicitly instruct the model to localize activity events and generate a detailed report containing a global theme and detailed event evidence. We enforce a rigid output format for each temporal segment, requiring specific fields such as time, person, human action, scene, object, camera, and an event caption. This design forces the model to decompose complex video dynamics into discrete, verifiable atomic elements rather than generating generic captions, effectively transforming raw video data into the structured representation used throughout the pipeline.

Before tackling complex reasoning tasks, the model must internalize the specific XML-like tag structure required for our framework. We therefore introduce a format warm-up stage, as shown in~\cref{fig:prompt_stage1.5}. The system prompt defines the interaction protocol: the assistant must first output the evidence block, followed by a thinking block, and conclude with the answering block. Since the task here is simply to reproduce video evidence, the thinking block is populated with a trivial placeholder. This step ensures the model adheres to the syntactic structure without the cognitive load of complex problem-solving.

The final stage focuses on generating high-quality chain-of-thought data, as depicted in~\cref{fig:prompt_stage2}. We combine the structured evidence from the first stage with challenging QA pairs. The prompt requires the model to first recall the structured evidence context. Crucially, the subsequent thinking block is no longer trivial; it contains a detailed derivation process where the model performs global search, evidence verification, and alignment based on the provided evidence. This stage produces the gold-standard thinking data used to warm-start the policy model, ensuring that the reasoning process is explicitly grounded in the extracted evidence.

\section{More details of Reinforcement Learning}
\subsection{Detailed Rewards Design}
\label{sec:reward_design}

To guide the model toward generating structured, evidence-grounded, and concise reasoning chains, we design a composite reward function consisting of three distinct components: \textit{Format Reward, Linear IoU Reward, Multi-choice Accuracy Reward, and Length Reward}.

\paragraph{Structured Thinking Format Reward ($r_{\text{fmt}}$).}
To enforce the structured reasoning schema proposed in our method, we design a rule-based format reward. This reward evaluates whether the model strictly follows the \texttt{<evidence>}, \texttt{<thinking>}, and \texttt{<answering>} XML structure and whether the \texttt{<thinking>} block explicitly traverses the required structured reasoning steps.

Let $\mathcal{K}$ be the set of mandatory structural keywords defined as:
\begin{equation}
\label{eq:keyword_set}
\begin{aligned}
    \mathcal{K} = \{ & \textit{``Global Search'', ``Evidence Verification'', ``Final Alignment'',} \\
                     & \textit{``Antecedent'', ``Visual Verification'', ``Consequence''} \}
\end{aligned}
\end{equation}

Let $\mathcal{P}_{\text{basic}}$ denote the satisfaction of the basic regex pattern, $\mathcal{T}_{\text{valid}}$ denote the constraint that all XML tags appear exactly once, and $\mathcal{C}_{\text{think}}$ represent the content extracted from the \texttt{<thinking>} block. The format reward $r_{\text{fmt}}$ is strictly defined as:

\begin{equation}
\label{eq:format_reward}
r_{\text{fmt}} = 
\begin{cases} 
1.0 & \text{if } \mathcal{P}_{\text{basic}} \land \mathcal{T}_{\text{valid}} \land (\mathcal{K} \subseteq \mathcal{C}_{\text{think}}) \\
0.5 & \text{if } \mathcal{P}_{\text{basic}} \land [\neg \mathcal{T}_{\text{valid}} \lor (\mathcal{K} \not\subseteq \mathcal{C}_{\text{think}})] \\
0.0 & \text{otherwise (pattern mismatch)}
\end{cases}
\end{equation}

This reward scheme assigns a baseline score of 0.5 for adhering to the basic format structure, grants a full score of 1.0 only when the reasoning chain is structurally complete and structurally rigorous, and assigns 0.0 for format violations.

\paragraph{Task-Specific Performance Rewards ($r_{\text{task}}$).}
Depending on the downstream task type, we apply different performance metrics as rewards:

\textit{Linear IoU Reward):} For temporal grounding tasks, we compute the Temporal Intersection over Union (TIoU) between the predicted time segments $S_{pred}$ and ground truth segments $S_{gt}$. To handle ambiguity where a single continuous prediction covers multiple disjoint ground truth segments, we implement a hybrid scoring mechanism. Specifically, if $|S_{pred}|=1$ and $|S_{gt}| > 1$, the reward is defined as $\max(\text{Coverage Ratio}, \text{Span IoU})$, ensuring the model is rewarded for correctly identifying the overall event span. Otherwise, we compute the average IoU across all segment pairs.

\textit{Multi-choice Accuracy Reward:} For QA tasks, we strictly parse the predicted option (A/B/C/D) from the \texttt{<answering>} block. The reward is a binary indicator function: $r_{\text{acc}} = \mathbb{I}(\text{Pred} = \text{GT})$.

\paragraph{Length Reward ($r_{\text{len}}$).}
To prevent the ``verbosity tax'' often associated with Chain-of-Thought reasoning, we introduce a soft length constraint. Unlike hard truncation, we employ a piecewise linear decay function to gently discourage excessive token generation beyond a target budget. 

Let $L$ be the generated response length, $L_{max}$ be the hard maximum length, and $L_{buffer}$ be a soft buffer zone. The target length is defined as $L_{target} = L_{max} - L_{buffer}$. The efficiency reward is calculated as:

\begin{equation}
r_{\text{len}}(L) = 
\begin{cases} 
1.0 & \text{if } L \le L_{target} \\
1.0 - \frac{L - L_{target}}{L_{buffer}} & \text{if } L_{target} < L \le L_{max} \\
0.0 & \text{if } L > L_{max}
\end{cases}
\end{equation}

This design provides a full reward for concise answers within the target budget and applies a linear penalty only when the model enters the buffer zone, preventing the optimization instability caused by abrupt reward cliffs.

\subsection{Frank-Wolfe Algorithm for Solving the Minimum-Norm Problem}
\label{appendix:frank_wolfe}

To solve the minimum-norm optimization problem in Eq.~\eqref{eq:mgda_opt_group}, we employ the Frank-Wolfe algorithm, also known as the conditional gradient method~\cite{jaggi2013revisitingfrankwolfe}, which is particularly well-suited for constrained convex optimization over polytopes such as the probability simplex.

\paragraph{Problem Formulation.}
Recall that our objective is to find the optimal weight vector $\boldsymbol{\alpha}^{*}$ that minimizes the squared norm of the aggregated direction in the standardized space:
\begin{equation}
\boldsymbol{\alpha}^{*} = \arg\min_{\boldsymbol{\alpha} \in \Delta_M} f(\boldsymbol{\alpha}), \quad \text{where} \quad f(\boldsymbol{\alpha}) = \left\| \hat{\mathbf{D}}_q \boldsymbol{\alpha} \right\|^{2}.
\end{equation}
This is a convex quadratic program over the simplex $\Delta_M = \{\boldsymbol{\alpha} \in \mathbb{R}^M \mid \sum_{m=1}^{M} \alpha_m = 1, \alpha_m \ge 0\}$.

\paragraph{Algorithm Description.}
Starting from a uniform initialization $\boldsymbol{\alpha}^{(0)} = \frac{1}{M}\mathbf{1}_M$, each iteration $t$ proceeds as follows:

\textit{Step 1: Gradient Computation.} We compute the gradient of the objective at the current iterate:
\begin{equation}
\nabla f(\boldsymbol{\alpha}^{(t)}) = 2\hat{\mathbf{D}}_q^\top (\hat{\mathbf{D}}_q \boldsymbol{\alpha}^{(t)}).
\end{equation}

\textit{Step 2: Linear Minimization Oracle (LMO).} We solve a linear minimization problem over the simplex:
\begin{equation}
\mathbf{s}^{(t)} = \arg\min_{\mathbf{s} \in \Delta_M} \langle \mathbf{s}, \nabla f(\boldsymbol{\alpha}^{(t)}) \rangle.
\end{equation}
Since $\Delta_M$ is a simplex, the solution is simply a one-hot vector placing all mass on the coordinate with the smallest gradient component:
\begin{equation}
\mathbf{s}^{(t)} = \mathbf{e}_{k^*}, \quad \text{where} \quad k^* = \arg\min_{m} [\nabla f(\boldsymbol{\alpha}^{(t)})]_m.
\end{equation}

\textit{Step 3: Exact Line Search.} We compute the update direction $\mathbf{d}^{(t)} = \mathbf{s}^{(t)} - \boldsymbol{\alpha}^{(t)}$ and determine the optimal step size via exact line search. For the quadratic objective, this admits a closed-form solution:
\begin{equation}
\gamma^{(t)} = \max\left(0, \min\left(1, \frac{- \langle \hat{\mathbf{D}}_q \boldsymbol{\alpha}^{(t)}, \hat{\mathbf{D}}_q \mathbf{d}^{(t)} \rangle}{\| \hat{\mathbf{D}}_q \mathbf{d}^{(t)} \|^2} \right)\right).
\end{equation}
If $\|\hat{\mathbf{D}}_q \mathbf{d}^{(t)}\|^2 < \epsilon_{\text{fw}}$ (we use $\epsilon_{\text{fw}} = 10^{-12}$), we terminate early to avoid numerical instability.

\textit{Step 4: Weight Update.} We update the weights via:
\begin{equation}
\boldsymbol{\alpha}^{(t+1)} = \boldsymbol{\alpha}^{(t)} + \gamma^{(t)} \mathbf{d}^{(t)}.
\end{equation}

\paragraph{Termination Criteria.}
The algorithm terminates when the change in weights falls below a tolerance threshold $\|\boldsymbol{\alpha}^{(t+1)} - \boldsymbol{\alpha}^{(t)}\| < \tau$ (we use $\tau = 10^{-6}$), or when the maximum number of iterations is reached (we use $T_{\max} = 50$).

\paragraph{Convergence Guarantees.}
The exact line search ensures monotonic decrease of the objective value at every iteration, guaranteeing convergence to the global optimum of this convex problem. In the context of multi-objective optimization, the solution corresponds to a Pareto-stationary point.

\section{Theoretical Analysis of P-FAB}
\label{sec:theory}

Under the GRPO formulation, we prove that the P-FAB algorithm, as implemented with standardized rewards, yields Pareto stationarity in parameter space without further assumptions on the policy gradient structure.

\subsection{Setup and Notation}

Consider a single prompt group $q$ with $G$ sampled responses $\{o_1, \ldots, o_G\}$ and $M$ reward objectives. We denote $r_{i,m}$ the reward of response $o_i$ on objective $m$, and define the centered reward $\delta_{i,m} = r_{i,m} - \frac{1}{G}\sum_j r_{j,m}$, which forms the matrix $\mathbf{D}_q \in \mathbb{R}^{G \times M}$. Let $\sigma_{q,m} = \mathrm{std}(\mathbf{D}_q[:, m])$ and $\mathbf{S} = \mathrm{diag}(1/\sigma_{q,1}, \ldots, 1/\sigma_{q,M})$. The standardized reward matrix is $\hat{\mathbf{D}}_q = \mathbf{D}_q \mathbf{S}$.

Let $\mathbf{p}_i = \nabla_\theta \log \pi_\theta(o_i \mid q) \in \mathbb{R}^d$ be the log-probability gradient for response $o_i$, and $\mathbf{P} = [\mathbf{p}_1, \ldots, \mathbf{p}_G]^\top \in \mathbb{R}^{G \times d}$ the stacked gradient matrix. Under the GRPO framework, the per-objective policy gradient for prompt $q$ takes the form:
\begin{equation}
\label{eq:per_obj_grad}
\nabla_\theta L_m(q) = \frac{1}{G} \sum_{i=1}^{G} \delta_{i,m}\, \mathbf{p}_i = \frac{1}{G} \mathbf{P}^\top \mathbf{d}_m,
\end{equation}
where $\mathbf{d}_m = \mathbf{D}_q[:, m]$ is the $m$-th column of the centered reward matrix.

The \textbf{P-FAB objective} (as implemented in Algorithm~\ref{alg:pfab_groupwise}) operates in the standardized reward space:
\begin{equation}
\label{eq:pfab_obj}
\boldsymbol{\alpha}^*_{\text{P-FAB}} = \arg\min_{\boldsymbol{\alpha} \in \Delta_M} \left\| \hat{\mathbf{D}}_q \boldsymbol{\alpha} \right\|^2 = \arg\min_{\boldsymbol{\alpha} \in \Delta_M} \left\| \mathbf{D}_q \mathbf{S} \boldsymbol{\alpha} \right\|^2.
\end{equation}

\subsection{Pareto Stationarity of Standardized P-FAB}
\label{sec:proof_pareto}

\begin{theorem}[Pareto Stationarity]
\label{thm:pareto}
If the P-FAB solution $\boldsymbol{\alpha}^* \in \Delta_M$ achieves perfect balance in the standardized reward space, i.e., $\hat{\mathbf{D}}_q \boldsymbol{\alpha}^* = \mathbf{0}$, then the policy is at a Pareto-stationary point: there exist non-negative weights $\{w_m\}_{m=1}^M$ with $\sum_m w_m = 1$ such that
\[
\sum_{m=1}^{M} w_m \nabla_\theta L_m = \mathbf{0}_d,
\]
regardless of $\mathbf{P}$. No assumption on the gradient structure is required.
\end{theorem}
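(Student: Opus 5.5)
The plan is to build the weights $w_m$ explicitly from $\boldsymbol{\alpha}^*$ and the diagonal rescaling $\mathbf{S}$, and then push the linear relation $\hat{\mathbf{D}}_q\boldsymbol{\alpha}^*=\mathbf{0}$ through the factorization of \cref{eq:per_obj_grad}. Since each $\nabla_\theta L_m = \frac{1}{G}\mathbf{P}^\top \mathbf{d}_m$ is linear in the column $\mathbf{d}_m$, any vector $\mathbf{w}$ with $\mathbf{D}_q\mathbf{w}=\mathbf{0}$ satisfies
\[
\sum_m w_m \nabla_\theta L_m = \tfrac{1}{G}\mathbf{P}^\top \mathbf{D}_q \mathbf{w} = \mathbf{0}_d
\]
for every $\mathbf{P}$. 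This is exactly why no assumption on the gradient structure is needed.

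The construction goes as follows. From $\hat{\mathbf{D}}_q\boldsymbol{\alpha}^* = \mathbf{D}_q\mathbf{S}\boldsymbol{\alpha}^* = \mathbf{0}$, set $\tilde{\mathbf{w}} = \mathbf{S}\boldsymbol{\alpha}^*$, so that $\tilde w_m = \alpha^*_m/\sigma_{q,m}$. These entries are non-negative because $\alpha^*_m\ge 0$ and $\sigma_{q,m}>0$. They are not all zero, since $\boldsymbol{\alpha}^*\in\Delta_M$ has some positive entry and $\mathbf{S}$ is positive diagonal. Define $Z=\sum_m \tilde w_m>0$ and $w_m = \tilde w_m/Z$. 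Then $\mathbf{w}\in\Delta_M$ and $\mathbf{D}_q\mathbf{w} = Z^{-1}\mathbf{D}_q\mathbf{S}\boldsymbol{\alpha}^* = \mathbf{0}$. Substituting into the display above gives $\sum_m w_m\nabla_\theta L_m = \mathbf{0}_d$, which is the stated Pareto-stationarity condition.

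The main obstacle is the bookkeeping around $\mathbf{S}$, which is only defined when every $\sigma_{q,m}>0$. In \cref{alg:pfab_groupwise}, objectives with $\sigma_{q,m}\le\tau$ are dropped, and $\hat{\mathbf{D}}_q$ is formed only over the valid set $\mathcal{V}$. I would handle this in two cases. If an objective has $\sigma_{q,m}=0$ exactly, then $\mathbf{d}_m=\mathbf{0}$, so $\nabla_\theta L_m=\mathbf{0}$ and that objective contributes nothing for any weight. The construction can then run on $\mathcal{V}$ with $w_m=0$ off $\mathcal{V}$, or the weight can be placed entirely on such an objective. If $\mathcal{V}=\emptyset$, every column is zero and the claim is trivial. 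The remaining concern is objectives with $0<\sigma_{q,m}\le\tau$ that are excluded from $\mathcal{V}$. I would state explicitly that the theorem is read either in the idealized setting where $\mathbf{S}$ is defined, as in the setup of \cref{sec:theory}, or with $\tau\to 0$. In that reading such objectives are treated as degenerate, and the argument covers only the objectives actually entering $\hat{\mathbf{D}}_q$.
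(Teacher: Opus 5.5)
Your proof is correct and matches the paper's own argument: take $w_m\propto\alpha^*_m/\sigma_{q,m}$ (i.e.\ normalize $\mathbf S\boldsymbol\alpha^*$) and push $\mathbf D_q\mathbf S\boldsymbol\alpha^*=\mathbf 0$ through $\frac1G\mathbf P^\top$; your check that the normalizer $Z$ is positive is a detail the paper leaves implicit. Your $\tau$ caveat is more cautious than needed, because Pareto stationarity only asks for some $\mathbf w\in\Delta_M$, so setting $w_m=0$ off $\mathcal V$ already covers excluded objectives with $0<\sigma_{q,m}\le\tau$; the only case your construction misses is $\mathcal V=\emptyset$, where $\hat{\mathbf D}_q$ is never formed and the theorem's hypothesis does not apply (and when $\tau>0$ the columns need not be exactly zero there, contrary to what you wrote).
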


\begin{proof}
Since $\hat{\mathbf{D}}_q \boldsymbol{\alpha}^* = \mathbf{D}_q \mathbf{S} \boldsymbol{\alpha}^* = \mathbf{0}$, we have from Eq.~\eqref{eq:per_obj_grad}:
\[
\sum_{m=1}^{M} \frac{\alpha^*_m}{\sigma_{q,m}} \nabla_\theta L_m = \frac{1}{G}\, \mathbf{P}^\top \mathbf{D}_q \mathbf{S} \boldsymbol{\alpha}^* = \mathbf{0}_d.
\]
Define $w_m = \frac{\alpha^*_m / \sigma_{q,m}}{\sum_{j=1}^{M} \alpha^*_j / \sigma_{q,j}}$. Since $\alpha^*_m \geq 0$ and $\sigma_{q,m} > 0$ for all valid objectives, we have $w_m \geq 0$ and $\sum_m w_m = 1$. Thus $\sum_m w_m \nabla_\theta L_m = \mathbf{0}_d$, which is the definition of Pareto stationarity.
\end{proof}

\paragraph{Interpretation.} Theorem~\ref{thm:pareto} shows that P-FAB's reward-space optimization directly implies a parameter-space guarantee. The standardization matrix $\mathbf{S}$ implicitly re-weights the objectives in gradient space: objectives with smaller variance (harder to satisfy) receive proportionally larger effective weights $w_m = \alpha^*_m / \sigma_{q,m}$. This is consistent with P-FAB's design goal of amplifying under-satisfied objectives.

\paragraph{Convergence behavior.} In practice, P-FAB drives $\|\hat{\mathbf{D}}_q \boldsymbol{\alpha}^*\|$ toward zero over training (as empirically shown in Section~\ref{sec:training_dynamics}). As this norm decreases, the combined policy gradient $\sum_m w_m \nabla_\theta L_m$ approaches zero, and the policy moves toward Pareto stationarity. The standardized formulation ensures that this convergence is scale-invariant: high-variance objectives cannot dominate the optimization at the expense of sparse but critical signals.

\paragraph{Connection to MGDA.} P-FAB is inspired by the Multiple Gradient Descent Algorithm (MGDA)~\cite{desideri2012mgda}, which finds the minimum-norm convex combination of per-objective gradients in parameter space. While MGDA requires computing the full gradient kernel matrix $\mathbf{K} = \mathbf{P}\mathbf{P}^\top$ (involving $G$ backward passes), P-FAB operates entirely in the reward space with $O(1)$ overhead. The Pareto stationarity guarantee above shows that this reward-space proxy is not merely a heuristic: at convergence, it provably recovers a Pareto-stationary point in parameter space.

\section{Additional Experiments and Analyses}
\label{sec:additional_analysis}

\subsection{TempCompass and EventBench Evaluation}
\label{sec:tempcompass_eventbench}

To further validate the temporal and event-level reasoning capabilities of our model, we evaluate on two additional benchmarks: TempCompass, which assesses temporal perception including event ordering and attribute change detection, and EventBench, which evaluates event-level understanding including temporal and causal reasoning sub-tasks. As shown in Table~\ref{tab:tempcompass_eventbench}, STEER-4B achieves 70.64\% on TempCompass and 63.48\% on EventBench, outperforming both Qwen3-VL-4B-Instruct and Qwen3-VL-4B-Thinking across all sub-tasks. Notably, the largest improvements appear in attribute change detection (+3.45\%) and causal reasoning (+2.60\%), confirming that structured event evidence effectively enhances the model's ability to capture fine-grained temporal dynamics and event dependencies.

\input{tables/tempcompass_eventbench}

\subsection{Scalability to 8B Model}
\label{sec:8b_experiment}

To validate that P-FAB generalizes beyond the 4B scale, we train Qwen3-VL-8B with LoRA using both standard GRPO and our P-FAB algorithm. As shown in Table~\ref{tab:8b}, P-FAB consistently outperforms GRPO across all benchmarks at the 8B scale, demonstrating that our multi-objective balancing approach is not specific to the 4B model and scales effectively to larger architectures.

\begin{table}[H]
\centering
\resizebox{0.75\linewidth}{!}{%
\begin{tabular}{@{}lcccc@{}}
\toprule
\textbf{Model (8B)} & \textbf{VideoMME} & \textbf{CTL-mIoU} & \textbf{TempCompass} & \textbf{EventBench} \\ \midrule
8B-GRPO & 57.93 & 38.06 & 62.75 & 58.25 \\
\textbf{8B-P-FAB} & \textbf{58.59} & \textbf{39.72} & \textbf{64.54} & \textbf{62.74} \\ \bottomrule
\end{tabular}
}
\vspace{1em}
\caption{P-FAB vs. GRPO on the 8B model scale (LoRA fine-tuning). P-FAB consistently outperforms GRPO, confirming the scalability of our approach.}
\label{tab:8b}
\end{table}

\subsection{P-FAB Computational Overhead}
\label{sec:pfab_overhead}

A key concern for multi-objective optimization algorithms is their computational cost. We measure the wall-clock time of the P-FAB weight computation step and compare it against the total training step time. As shown in Table~\ref{tab:pfab_overhead}, P-FAB adds negligible overhead ($0.0013\%$ of total step time) across all group sizes, as the core computation involves only a small quadratic program over the number of objectives ($M=4$), independent of the model size.

\begin{table}[H]
\centering
\resizebox{0.5\linewidth}{!}{%
\begin{tabular}{@{}ccc@{}}
\toprule
\textbf{Group Size} & \textbf{P-FAB (ms)} & \textbf{One Step (ms)} \\ \midrule
4  & 0.68 & 52,789 \\
8  & 0.69 & 108,401 \\
16 & 1.00 & 211,389 \\ \bottomrule
\end{tabular}
}
\vspace{1em}
\caption{P-FAB computational overhead. The weight computation is negligible ($<0.001\%$) relative to the total training step time.}
\label{tab:pfab_overhead}
\end{table}

\subsection{Training Dynamics of P-FAB}
\label{sec:training_dynamics}

To provide insight into how P-FAB dynamically balances competing objectives during training, we visualize the full training trajectory over 949 steps in Figure~\ref{fig:training_dynamics}.

\paragraph{Adaptive weight reallocation.} As shown in Figure~\ref{fig:training_dynamics}(a), the P-FAB weights $\boldsymbol{\alpha}^*$ exhibit clear adaptive behavior. The Format Reward weight drops sharply from $\sim$0.19 to $\sim$0.04 as the format reward saturates early (reaching $>$0.97), while the IoU Reward weight rises from $\sim$0.25 to $\sim$0.36, reflecting that temporal grounding remains the most challenging objective throughout training. The Accuracy Reward consistently receives the highest weight ($\sim$0.42--0.45) due to its binary and sparse nature. This demonstrates P-FAB's core mechanism: automatically amplifying difficult, under-satisfied objectives and down-weighting already-saturated ones.

\paragraph{Convergence toward Pareto stationarity.} Figure~\ref{fig:training_dynamics}(b) shows that all four reward objectives improve simultaneously over training, consistent with the system converging toward Pareto stationarity: as $\|\hat{\mathbf{D}}_q \boldsymbol{\alpha}^*\|$ decreases, the combined policy gradient approaches zero for all objectives (Theorem~\ref{thm:pareto}). Notably, the IoU reward, which receives increasing weight from P-FAB, shows steady improvement from 0.12 to 0.24, validating that the adaptive weighting translates into tangible gains on hard objectives.

\paragraph{Pareto-efficient length-accuracy trade-off.} Figure~\ref{fig:training_dynamics}(c) shows that the mean output length decreases from $\sim$3,100 to $\sim$2,500 tokens during training. This demonstrates that P-FAB achieves a Pareto-efficient balance between conciseness and accuracy: rather than sacrificing accuracy for brevity or allowing unconstrained verbosity, the system navigates the trade-off surface to compress reasoning chains without degrading task performance.

\begin{figure}[H]
\centering
\includegraphics[width=\linewidth]{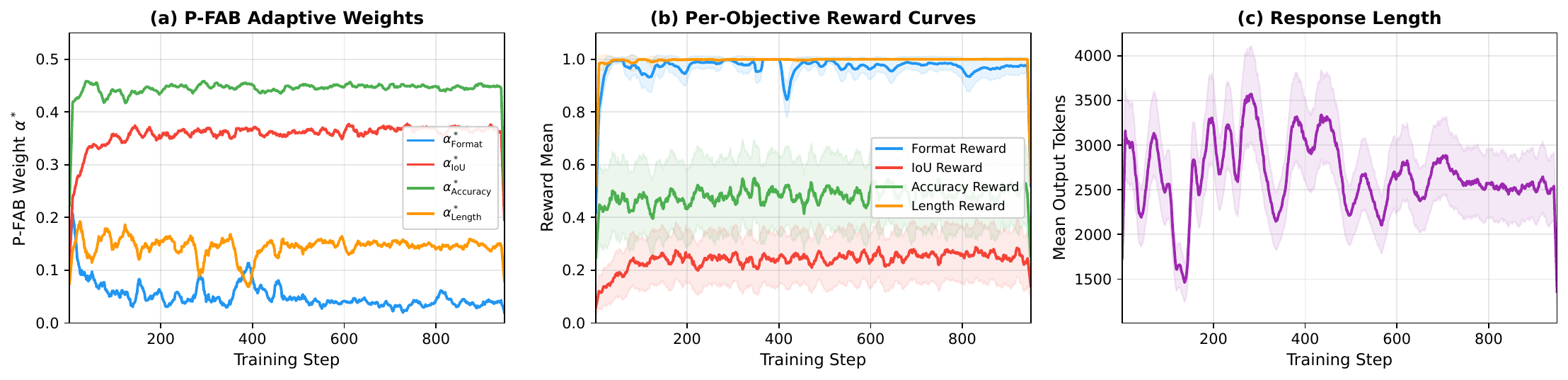}
\caption{Training dynamics of P-FAB over 949 steps. (a) P-FAB adaptively reallocates weights: the Format weight decreases as this reward saturates, while the IoU weight increases to emphasize the hardest objective. (b) All four reward objectives improve during training. (c) Mean output token length decreases, indicating more efficient reasoning.}
\label{fig:training_dynamics}
\end{figure}

\subsection{Token Length and Inference Efficiency}
\label{sec:token_analysis}

We analyze the average output token length across benchmarks to quantify the inference efficiency of our approach. As shown in Table~\ref{tab:token_length}, STEER-4B with P-FAB produces significantly shorter outputs than both the Thinking baseline and GRPO variant on temporal grounding tasks (CTL: 849 vs. 4288 tokens; TempCompass: 436 vs. 1388 tokens), demonstrating that structured evidence effectively compresses the reasoning chain. This confirms that P-FAB successfully balances the length-accuracy trade-off, producing concise yet accurate reasoning.

\begin{table}[H]
\centering
\resizebox{0.8\linewidth}{!}{%
\begin{tabular}{@{}lcccc@{}}
\toprule
\textbf{Model} & \textbf{VideoMME} & \textbf{CTL} & \textbf{TempCompass} & \textbf{EventBench} \\ \midrule
Qwen3-VL-4B-Thinking & 3228 & 4288 & 1388 & 798 \\
STEER-GRPO & 4179 & 922 & 1032 & 2046 \\
\textbf{STEER-P-FAB} & \textbf{3558} & \textbf{849} & \textbf{436} & \textbf{571} \\ \bottomrule
\end{tabular}
}
\vspace{1em}
\caption{Average output token length across benchmarks. P-FAB achieves substantially shorter outputs on temporal tasks while maintaining strong performance, confirming effective length-accuracy balancing.}
\label{tab:token_length}
\end{table}

\subsection{Evidence Quality Evaluation}
\label{sec:fact_quality}

To assess the quality of the structured event evidence generated by our model, we compute standard text generation metrics (F1, BLEU-4, ROUGE-L, CIDEr) against the gold-standard evidence annotations. As shown in Table~\ref{tab:fact_quality}, STEER-4B substantially outperforms the base Instruct model across all metrics, confirming that our multi-stage training pipeline effectively teaches the model to produce high-quality, precise structured evidence.

\begin{table}[H]
\centering
\resizebox{0.65\linewidth}{!}{%
\begin{tabular}{@{}lcccc@{}}
\toprule
\textbf{Model} & \textbf{F1} & \textbf{BLEU-4} & \textbf{ROUGE-L} & \textbf{CIDEr} \\ \midrule
Qwen3-VL-4B-Instruct & 44.2 & 10.6 & 27.1 & 4.1 \\
\textbf{STEER-4B} & \textbf{54.7} & \textbf{16.6} & \textbf{35.1} & \textbf{17.3} \\ \bottomrule
\end{tabular}
}
\vspace{1em}
\caption{Evidence quality evaluation. STEER-4B generates substantially higher-quality structured evidence compared to the base model.}
\label{tab:fact_quality}
\end{table}

\subsection{Training Data Overlap Analysis}
\label{sec:data_overlap}

Since our training data is sourced from existing VTG benchmarks (using only training splits with regenerated annotations), we conduct overlap analyses by separately excluding all Charades and ActivityNet training videos and re-evaluating on the corresponding test benchmarks. As shown in Table~\ref{tab:data_exclusion}, both exclusion experiments result in only minor performance drops. For Charades-TimeLens, the degradation is negligible (R1@0.3: $57.1 \to 56.7$, R1@0.5: $40.4 \to 39.8$, R1@0.7: $21.6 \to 21.1$). For ActivityNet-Captions, the drop remains modest (R1@0.3: $61.7 \to 59.3$, R1@0.5: $41.2 \to 38.9$, R1@0.7: $28.1 \to 26.4$), despite ActivityNet being the largest video source in our training corpus. These results confirm that the model's temporal grounding capability generalizes beyond the training sources and is not an artifact of data memorization.

Notably, the performance drop for ActivityNet is larger than for Charades, which is expected given that ActivityNet constitutes the largest portion of our training corpus. However, even after exclusion, the model still significantly outperforms all open-source baselines, confirming that the gains primarily stem from the structured reasoning framework rather than data memorization.

\begin{table}[H]
\centering

\resizebox{0.8\linewidth}{!}{%
\begin{tabular}{@{}llccc@{}}
\toprule
\textbf{Eval Benchmark} & \textbf{Training Setting} & \textbf{R1@0.3} & \textbf{R1@0.5} & \textbf{R1@0.7} \\ \midrule
\multirow{2}{*}{Charades-TimeLens} & w/o Charades training & 56.7 & 39.8 & 21.1 \\
 & \textbf{STEER-4B} & \textbf{57.1} & \textbf{40.4} & \textbf{21.6} \\ \midrule
\multirow{2}{*}{ActivityNet-Captions} & w/o ActivityNet training & 59.3 & 38.9 & 26.4 \\
 & \textbf{STEER-4B} & \textbf{61.7} & \textbf{41.2} & \textbf{28.1} \\ \bottomrule
\end{tabular}
}

\vspace{1em}
\caption{Training data exclusion experiments. We separately exclude all Charades and ActivityNet training videos and evaluate on the corresponding benchmarks. Both experiments show only minor performance drops, confirming that our model generalizes beyond the training data sources.}
\label{tab:data_exclusion}
\end{table}

\subsection{Reasoning Quality Evaluation}
\label{sec:reasoning_quality}

To evaluate the quality of the intermediate reasoning process (beyond final-answer accuracy), we employ Gemini-2.5-Pro as an automatic judge to rate model outputs on four dimensions: Overall quality, Hallucination avoidance, Logicality of the reasoning chain, and Interpretability of the structured output. Each dimension is scored on a 1--5 scale. As shown in Table~\ref{tab:reasoning_quality}, STEER-4B matches or outperforms the Thinking baseline on all dimensions, with particularly strong gains in Interpretability ($+1.0$) and Logicality ($+0.25$), validating that structured event evidence enhances the reasoning process quality.

\begin{table}[H]
\centering
\resizebox{0.7\linewidth}{!}{%
\begin{tabular}{@{}lcccc@{}}
\toprule
\textbf{Model} & \textbf{Overall} & \textbf{Halluc.} & \textbf{Logic.} & \textbf{Interp.} \\ \midrule
Qwen3-4B-Thinking & 4.00 & 4.0 & 4.25 & 3.75 \\
\textbf{STEER-4B} & \textbf{4.41} & \textbf{4.0} & \textbf{4.50} & \textbf{4.75} \\ \bottomrule
\end{tabular}
}
\vspace{1em}
\caption{Reasoning quality evaluation by Gemini-2.5-Pro. STEER-4B produces more logical and interpretable reasoning chains.}
\label{tab:reasoning_quality}
\end{table}

\clearpage
\begin{figure}[H]
\centering
    \begin{minipage}[b]{1.0\linewidth}
        \includegraphics[width=\linewidth]{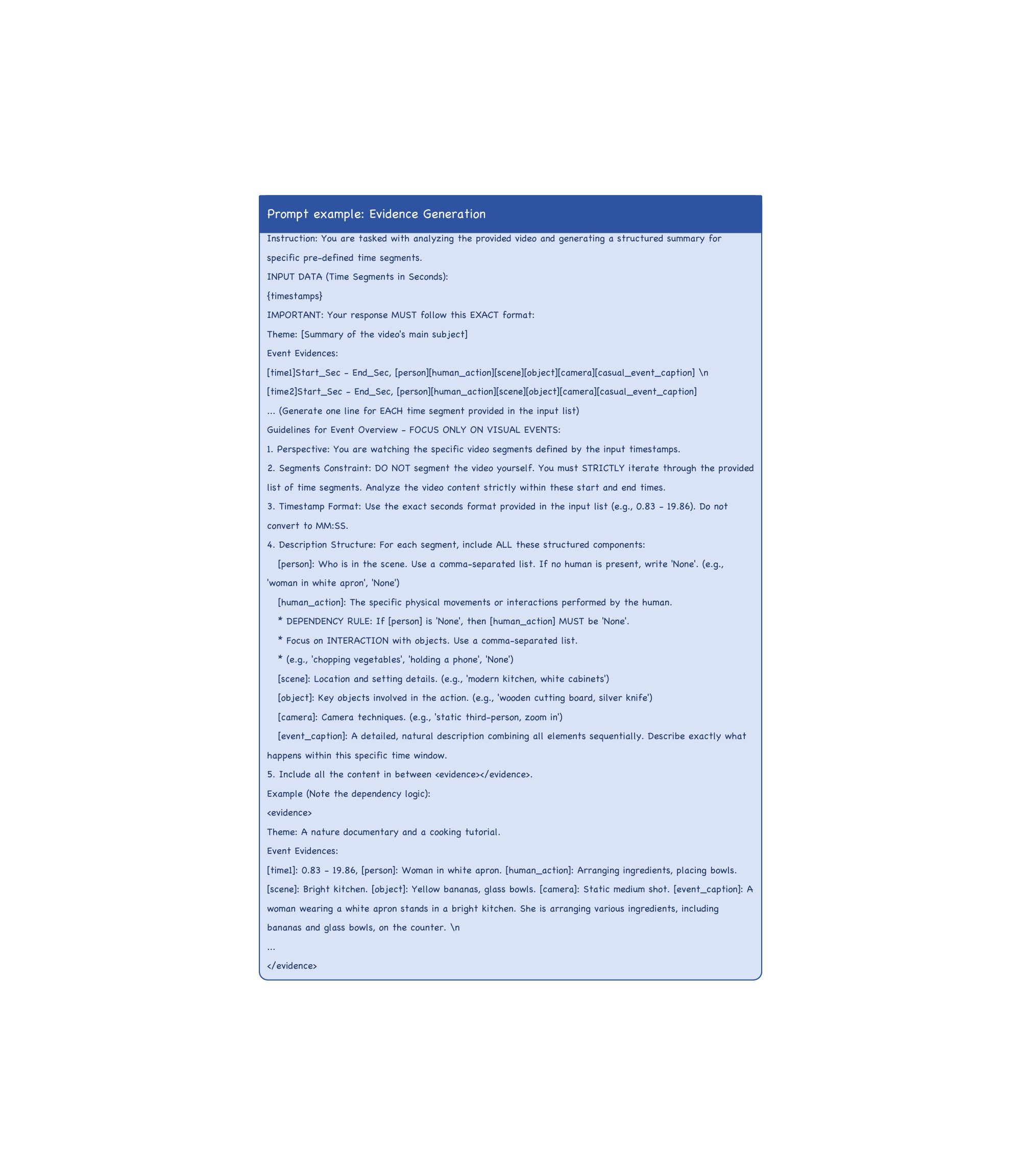}
    \end{minipage}
    \caption{Evidence Generation prompt used in the data curation pipeline to produce structured event evidence from videos.}
    \label{fig:prompt_fact}
\end{figure}

\begin{figure}[H]
\centering
    \begin{minipage}[b]{1.0\linewidth}
        \includegraphics[width=\linewidth]{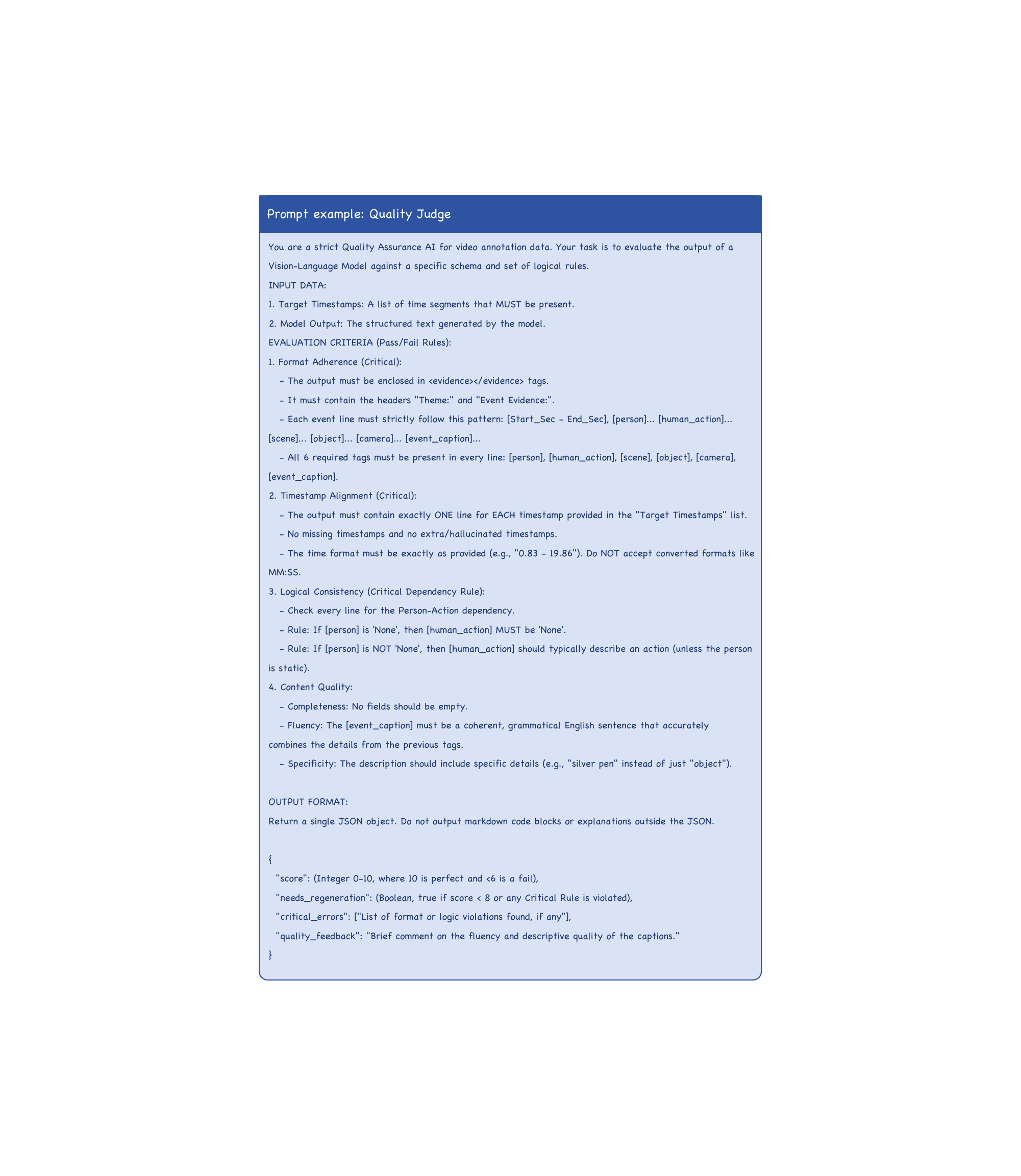}
    \end{minipage}
    \caption{Quality Judge prompt used in the data curation pipeline to validate generated evidence.}
    \label{fig:prompt_judge}
\end{figure}

\begin{figure}[H]
\centering
    \begin{minipage}[b]{1.0\linewidth}
        \includegraphics[width=\linewidth]{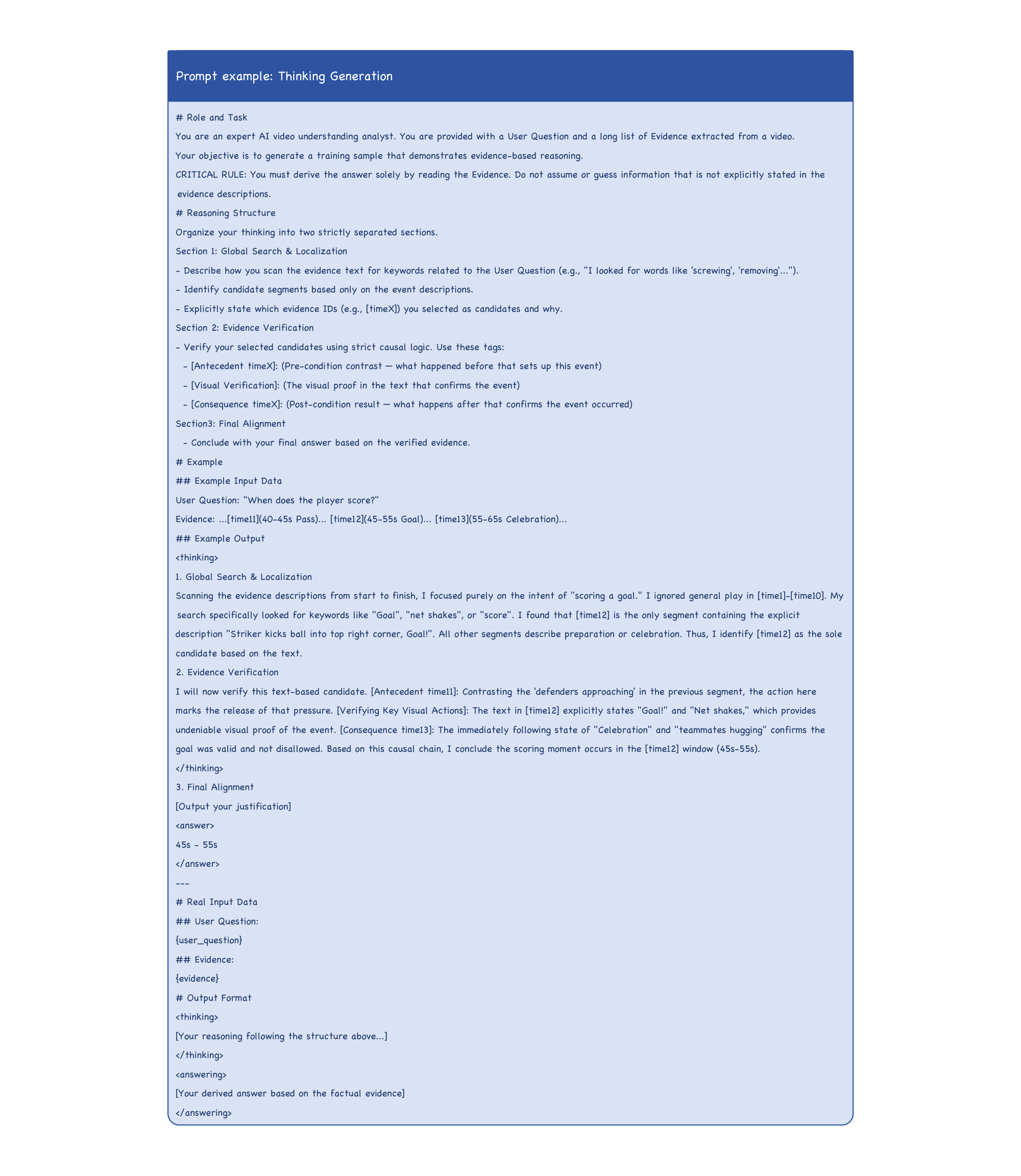}
    \end{minipage}
    \caption{Thinking Generation prompt used in the data curation pipeline to produce evidence-grounded reasoning traces.}
    \label{fig:prompt_think}
\end{figure}

\begin{figure}[H]
\centering
    \begin{minipage}[b]{1.0\linewidth}
        \includegraphics[width=\linewidth]{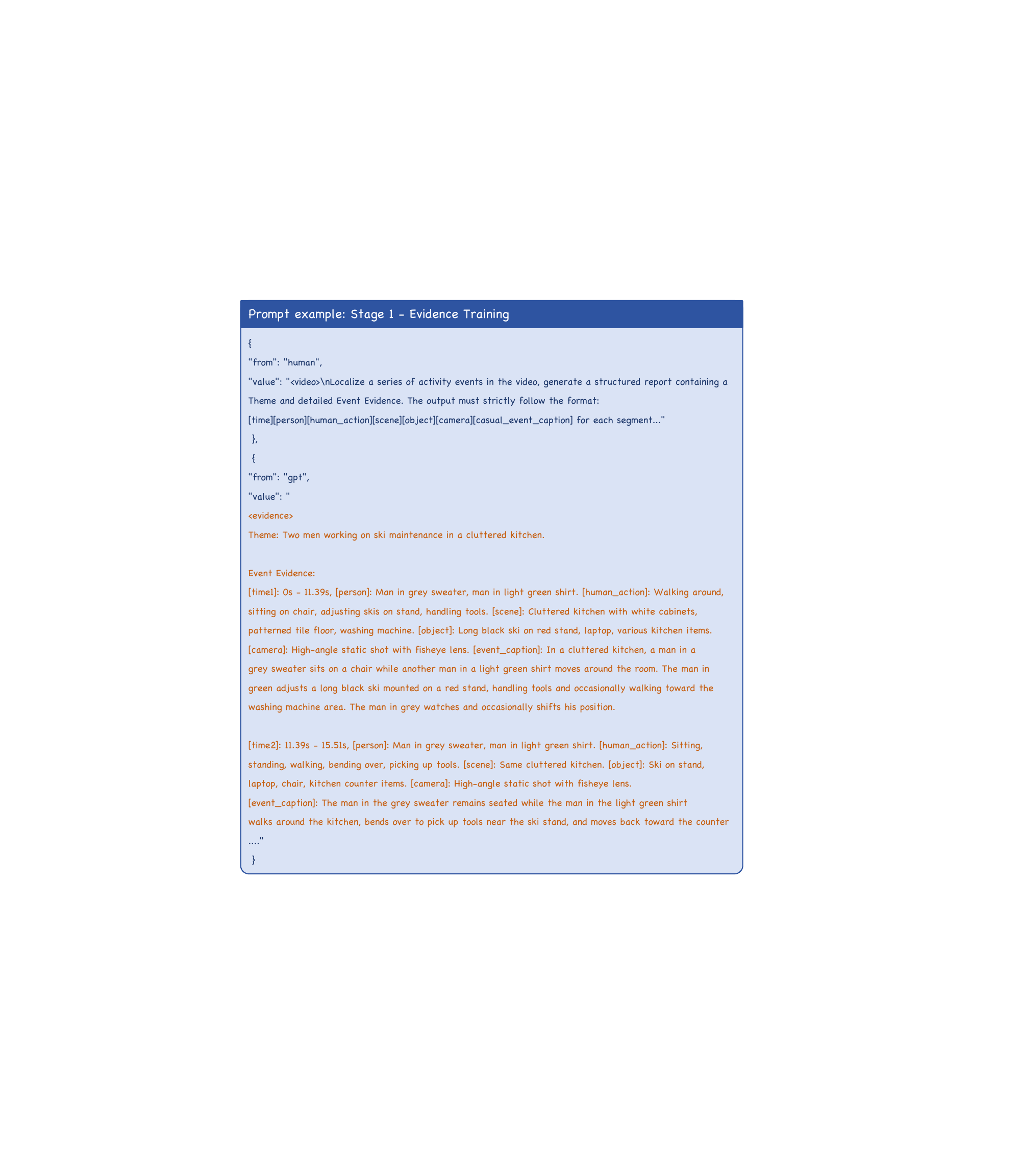}
    \end{minipage}
    \caption{Prompts for Stage 1 evidence training.}
    \label{fig:prompt_stage1}
\end{figure}

\begin{figure}[H]
\centering
    \begin{minipage}[b]{1.0\linewidth}
        \includegraphics[width=\linewidth]{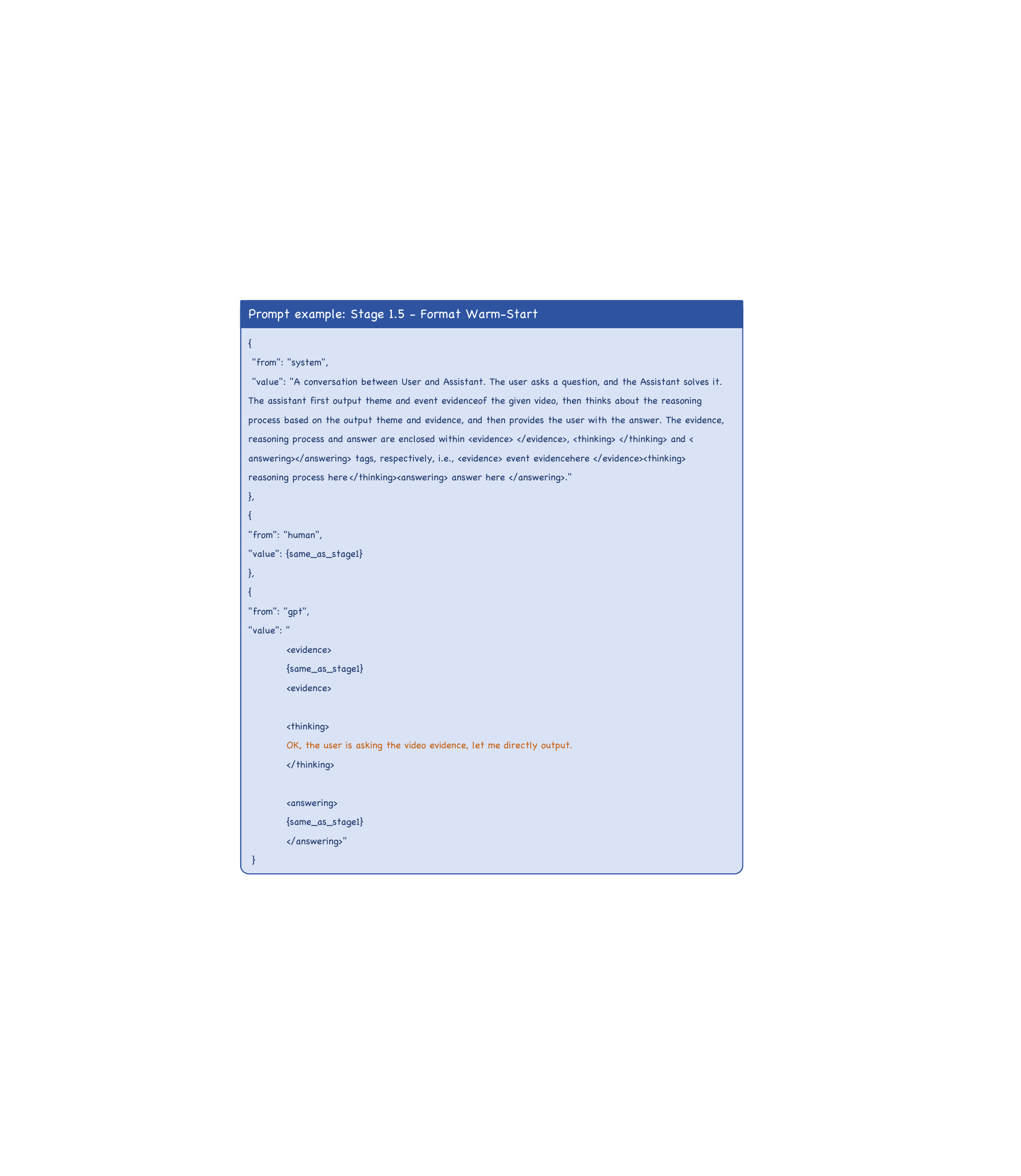}
    \end{minipage}
    \caption{Prompts for Stage 1.5 format warm-start. Orange parts mean these parts are different from the previous parts.}
    \label{fig:prompt_stage1.5}
\end{figure}

\begin{figure}[H]
\centering
    \begin{minipage}[b]{1.0\linewidth}
        \includegraphics[width=\linewidth]{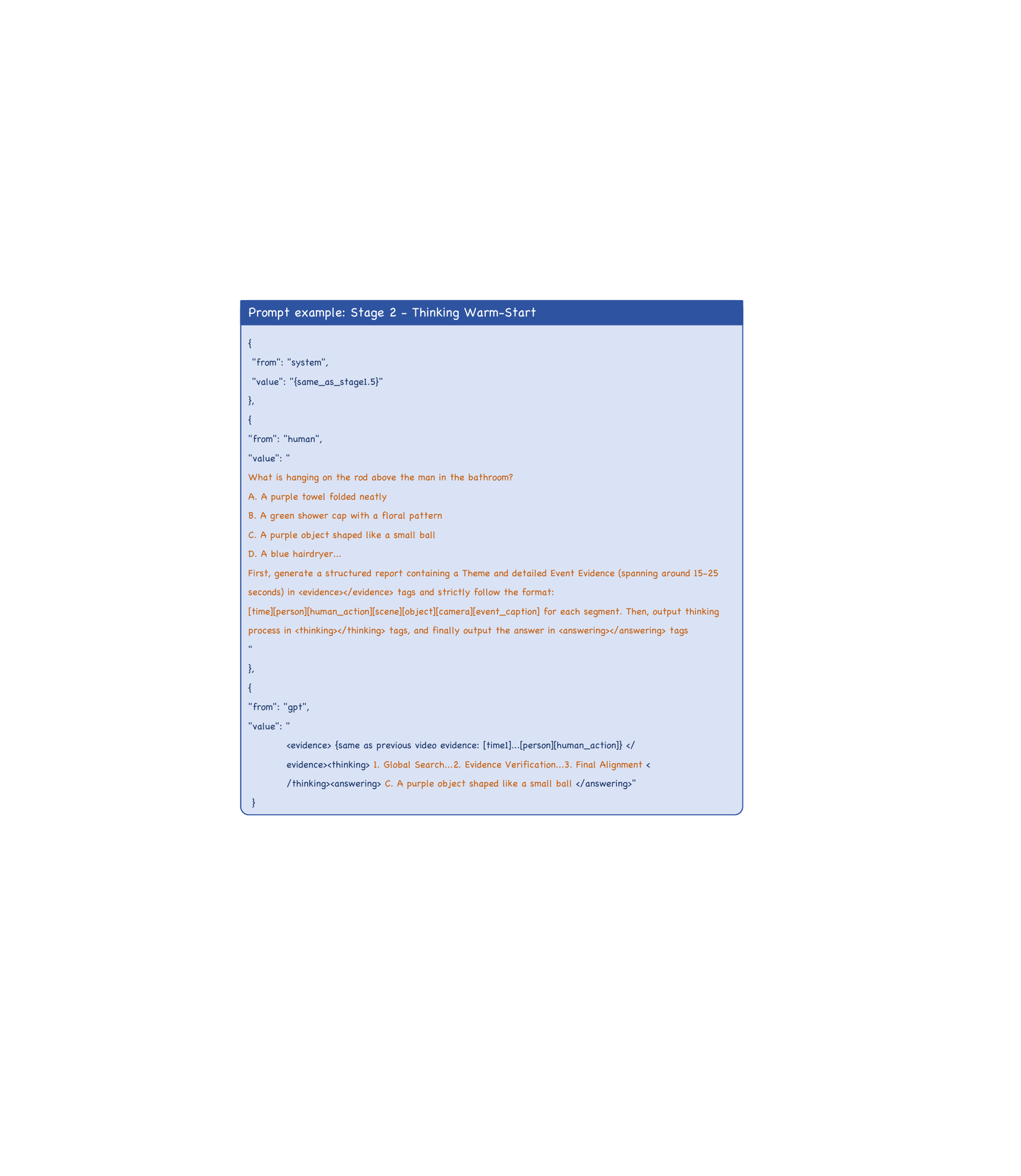}
    \end{minipage}
    \caption{Prompts for Stage 2 thinking warm-start. Orange parts mean these parts are different from the previous parts.}
    \label{fig:prompt_stage2}
\end{figure}

%% file: tables/tempcompass_eventbench.tex
\begin{table}[t]
\centering

\resizebox{0.8\linewidth}{!}{%
\begin{tabular}{@{}l|ccc|ccc@{}}
\toprule
\multirow{2}{*}{\textit{\textbf{Model}}} & \multicolumn{3}{c|}{\textbf{TempCompass}} & \multicolumn{3}{c}{\textbf{EventBench}} \\ \cmidrule(l){2-7}
 & Overall & \begin{tabular}[c]{@{}c@{}}Event\\Order\end{tabular} & \begin{tabular}[c]{@{}c@{}}Attr.\\Change\end{tabular} & Overall & \begin{tabular}[c]{@{}c@{}}Temporal\\Reason.\end{tabular} & \begin{tabular}[c]{@{}c@{}}Causal\\Reason.\end{tabular} \\ \midrule
Qwen3-VL-4B-Instruct & 68.71 & 74.76 & 70.40 & 62.71 & 64.51 & 66.87 \\
Qwen3-VL-4B-Thinking & 68.60 & 73.20 & 69.00 & 60.44 & 62.57 & 65.22 \\
\textbf{STEER-4B} & \textbf{70.64} & \textbf{76.96} & \textbf{73.85} & \textbf{63.48} & \textbf{65.42} & \textbf{69.47} \\ \bottomrule
\end{tabular}
}

\vspace{1em}
\caption{Results on temporal and event reasoning benchmarks. TempCompass evaluates temporal perception; EventBench assesses event-level understanding including temporal and causal reasoning sub-tasks.}
\label{tab:tempcompass_eventbench}
\end{table}